\newtheorem{assumption}{Assumption}
\newcommand*{\QEDA}{\hfill \ensuremath{\square}}
\newtheorem{lemma}{Lemma}
\newtheorem{definition}{Definition}
\newlist{steps}{enumerate}{1}
\setlist[steps, 1]{label = Step \arabic*:}
\begin{document}

\title{Graph Exploration for Effective Multi-agent Q-Learning}

\author{Ainur Zhaikhan and Ali H. Sayed

\thanks{A. Zhaikhan and A. H. Sayed are with the Adaptive
Systems Laboratory, \'Ecole Polytechnique F\'ed\'erale de Lausanne (EPFL),
CH-1015, Switzerland.}}



\maketitle

\begin{abstract}
This paper proposes an exploration technique for multi-agent reinforcement learning (MARL) with graph-based communication among agents. We assume the individual rewards received by the agents are independent of the actions by the other agents, while their policies are coupled. In the proposed framework, neighbouring agents  collaborate to estimate the uncertainty about the state-action space in order to execute more efficient explorative behaviour. Different from existing works, the proposed algorithm does not require counting mechanisms and can be applied to continuous-state environments without requiring complex conversion techniques. Moreover, the proposed scheme allows agents to communicate in a fully decentralized manner with minimal information exchange. And for continuous-state scenarios, each agent needs to exchange only a single parameter vector. The performance of the algorithm  is verified with theoretical results for discrete-state scenarios and with experiments for continuous ones. 
\end{abstract}

\begin{IEEEkeywords}
multi-agent reinforcement learning,  continuous state space, parallel MDP, exploration.
\end{IEEEkeywords}

\section{Introduction}
\label{sec:intro}

In reinforcement learning (RL), agents learn gradually through direct interaction with the environment. The successful operation of any RL algorithm is dependent on: i) how efficiently an agent is able to learn from collected data and ii) how efficiently it explores the design space and collects data. The former process is known as {\em exploitation}, while the latter is known as {\em exploration}  \cite{Exp_survey, Exp_survey2, sayed_2022} and it enables agents to explore the environment in a guided manner in order to collect informative data about the state, reward, and transition samples. Devising reliable exploration strategies is an important goal because they allow agents to  avoid being trapped into visiting a limited number of states and ignoring more rewarding possibilities. Encouraging agents to explore unseen states is essential for allowing successful learning in many cases of interest. \par 
Exploration in single agent RL has been intensively studied in the literature while most existing MARL algorithms  omit exploration by assuming it is easily obtainable through naive noise-based methods \cite{Q_mix, Rashid2,Alg1,Alg2}.  As explained in \cite{ISL}, such standard ``non-deep" exploration techniques  often fail in sparse reward problems. In order to tackle this issue, a couple of MARL algorithms have considered non-trivial exploration mechanisms \cite{justin, CMAE, LSVI_based}. However, some of these works cannot be directly applied to continuous-state problems \cite{justin}. This limitation comes from the fact  that the exploration approaches proposed in these works are based on the {\em counting}  of state-action visitation, which is technically feasible only when the state space is discrete and finite. A group of works \cite{CMAE,LSVI_based} has developed exploration solutions for continuous-state environments, but they often lack theoretical guarantees, and do not necessarily apply to networked agents or require fully connected networks. \par

 \par 
 To this end, in this work, we propose a provably efficient exploration algorithm for multi-agent networks. Our model does not require full connectedness and assumes graph-based communication among the agents. For learning and execution, the agents do not need to observe the states and actions of all other agents nor share knowledge with all of them. This model  respects the privacy of agents and better reflects real word scenarios where the operation of agents is usually localized.  \par 
 Moreover, our proposed algorithm does not require a counting mechanism. This is a useful feature because counting is not feasible for large or continuous state-spaces, which is the case for many situations of interest.   To bypass this problem, we replace the operation of counting by the procedure of variance estimation within neighborhoods. In general, the use of estimated variances as an exploration signal is a known technique in the study of single-agent RL methods \cite{TDU, TDE}. In this scenario, the agent needs to generate an ensemble of estimates to estimate uncertainties. Extending the idea to the multi-agent case allows us to avoid the need for ensemble estimation at the individual agents. This is because agents can now tap into information from their immediate neighborhoods, thus reducing the workload for the agents. The challenge is to verify that by doing so, the resulting exploration strategy leads to effective learning. Indeed, we verify that the proposed method is  provably efficient in a discrete-state scenarios. \par 
 In the literature, to the best of our knowledge, the existing provably efficient MARL algorithms with exploration are those based on counting \cite{justin} and/or consider centralized architectures where agents communicate with all other agents\cite{LSVI_based}.  
In general, the addition of exploration mechanisms to the MARL setting introduces some stochastic components into the system's behavior, which make the theoretical analysis more demanding.  
Moreover,  a suitable definition of ``sufficient exploration'' remains open in the literature. One common metric for evaluating the exploration ability of a strategy is to show that it leads to a bounded regret. The regret, by definition,  is the expected reward loss due to following some behavioral policy instead of acting optimally. Therefore, when the regret is bounded, the behavioral policy is guaranteed to converge to the optimal one. However, regret-based analysis can be challenging, especially for scenarios with networked agents. One advantage of our exploration algorithm is that we can examine its convergence properties by  avoiding regret analysis.\par 
In particular, our algorithm is based on the paradigm of Q-learning, which is known to converge to the optimal solution under the assumption that all states and actions are visited infinitely often. In addition, recall that Q-learning is an off-policy algorithm, where an agent can learn the optimal (or target) policy based on the samples obtained from following a different policy. Such decoupling between target and behavioral policies allows us to design behavioral policies that ensure sufficient exploration during learning. Therefore, in our work, instead of pursuing a regret analysis, we design and verify that under the proposed behavioral policy all state and actions are visited infinitely often.
\par

  \par 
 \par
  Our work focuses on a specific scenario of MARL where agents exchange information to improve decision making but they are not collaborating in their actions.  
  In practice, this scenario fits search problems where  collaboration is not mandatory for fulfilling the target task. In such problems, it is not necessary that all agents come close to the target to declare that it was found. If one of the agents finds the target it can share this knowledge with others so that other agents can update their policies  and, at the next step, make decisions towards the target.  As a contrasting example consider a game where agents not only need to find the object but also to lift or push it. In this case, cooperation among agents in terms of actions is required since the heavy object cannot be moved or lifted unless all agents act simultaneously and in the same direction.   
   This type of multi-agent problems, where concepts of joint actions and joint rewards are omitted, are best described within the parallel MDP setting used in \cite{LSVI_based,justin}. More details are provided in Sec. \ref{sec:preliminaries}\par

\section{Related work}
\label{sec:literature}
A plethora of work, including  Independent Learning \cite{IL1}, Multi-Agent DDPG (MADDPG) \cite{MADDP},  Q-mix \cite{Q_mix} and VDN \cite{VDN} have been developed in the literature to capture nonstationarity of the environment and analyze other complexities in multi-agent problems \cite{IL1,MADDP,Q_mix,VDN}. However, these works focus on the exploitation part of RL problems assuming exploration granted. This assumption is problematic when the reward function depends only on a small subset of an exponentially growing state space.  When the observed reward samples are not immediately informative, then effective exploration becomes especially important. Therefore, in this work, we are interested in sparse reward environments, which most existing strategies in MARL do not contemplate. \par 
  One work that targets specifically sparse reward problems is Coordinated Multi-Agent Exploration (CMAE) \cite{CMAE}.   This work develops a state projection-based scheme, which allows agents to implement a from-low-to-high search in the state-space. CMAE considers a more general framework than ours as it allows rewards to depend on joint actions. However, in comparison, there are three issues that our work can address. First, CMAE is more suitable for grid-based environments. Its implementation in continuous-state games is complex and memory demanding due to the need for additional conversion techniques such as hash-tables and neural networks. Second, the CMAE framework is limited only to centralized training  and decentralized execution paradigm while our algorithm  operates in a {\em fully decentralized} mode. Finally, CMAE is mainly experimentally validated, with theoretical analysis provided for specific matrix games; its performance in a more general setting is not guaranteed.\par 
Two recent works that studied exploration in a parallel MDP setting are  a graph extension of the upper-confidence bound (GUCB) \cite{justin} and a multi-agent version of Least-square Value Iteration (MALSVI) \cite{LSVI_based}. Both algorithms are designed for an episodic MDP, which is commonly considered in less challenging scenarios than the one assumed in our model, where we consider a discounted infinite horizon MDP \cite{Inf_horizon,Inf_horizon2}. As the name implies, GUCB allows graph-based communication among agents. Due to the exploration bonus \cite{justin}, which depends on the structure of the graph and the number of state-action visitations, GUCB  executes a provably efficient exploration.  However, similar to the single agent UCB,  GUCB  is also limited only for the discrete state case and it requires counting. In comparison, MALSVI is  a multi-agent exploration algorithm that does not need state counting. However, it does not always allow agents to communicate in a distributed manner. Under MALSVI, agents learn independently and at some time instances, when a certain condition is met,  knowledge from all agents is aggregated. 
In other words, the operation of MALSVI requires periodical full-connectivity of agents, while our algorithm allows agents to have graph-structured connectivity. Since GUCB and MALSVI are the closest to our work in the setting, our simulation results will be presented in comparison to these two algorithms. 
 
\section{Markov Decision Process (MDP)}
\label{sec:preliminaries}
As an underlying model, we consider a scenario where $K$ agents are learning over $K$ identical parallel MDPs. 
 The agents can communicate and exchange information with their neighbors, as defined by some graph $G$. We let $\mathcal{E}$ denote the edge set of $G$ and introduce the index set $\mathcal{K}\triangleq [1,2 \ldots K]$. The neighbourhood of an agent $k$ is defined as $\mathcal{N}_k\triangleq\{j \in \mathcal{K}: (j,k) \in \mathcal{E}\}$. \par 
Each of the $K$ MDPs is defined
by a $5$-tuple $(\mathcal{S}, \mathcal{A}, \mathcal{P}, \mathcal{R},\gamma)$, where $\mathcal{S}$ is the state space, $\mathcal{A}$ is the action space,  $\mathcal{R}:\mathcal{S} \times \mathcal{A} \to \mathbb{R}$ is the reward function, and $\gamma\in(0,1)$ is a reward discount factor useful for infinite-horizon problems. For analysis purposes, the  state and action spaces are assumed to be discrete. However, in the implementation and computer simulations, we will illustrate the case in which the state space is continuous. Moreover, $\mathcal{P}:\mathcal{S} \times \mathcal{A} \to \mathcal{S}$ denotes the transition model, where the value $\mathcal{P}(s'|s,a)$ denotes the probability of transitioning from state $s$ to state $s'$ after taking action $a$.\par 
Let $\pi(a|s)$ denote an arbitrary policy, which describes the likelihood of selecting action $a$ at state $s$. Then, the state-action value associated with an arbitrary pair $(s,a)$ is defined by
\begin{align}
\label{eq:Q}
    Q^{\pi}(s, a)&\triangleq \mathbb{E}\left[\sum_{n=0}^{\infty} \gamma^n r_n \mid s_{0}=s, a_0=a\right] ,
\end{align}
where the shorthand notation $r_n\triangleq r(s_n,a_n,s_{n+1})$ represents the reward that results from taking action $a_n$ at state $s_n$, both at time $n$, and transitioning to state $s_{n+1}$. For simplicity of notation, we denote the cardinalities of the respective sets by the symbols $A\triangleq|\mathcal{A}|$, $S\triangleq|\mathcal{S}|$, and  $N_k\triangleq|\mathcal{N}_k|$.  \par 

\subsection{Single agent Q-learning}
The framework proposed in this paper will build upon Q-learning, which is  a popular model-free, online, and off-policy RL scheme \cite{Barto,sayed_2022}. The method is model-free because its update rule does not depend on using either the transition or reward functions of the MDP.  The method is online because the update rule depends on the current observations only, without the need for data accumulation. More specifically, let us assume that at time instant $n$ an agent observes a transition from state $s$ to state $s'$ under some action $a$ and receives the reward $r(s,a,s')$. Then, the state-action value estimates $\widehat{Q}_{n+1}(s,a)$ are updated as follows \cite{Barto,sayed_2022}:
\begin{align}
\label{eq:Q_update}
\widehat{Q}_{n+1}(s,a)=& \ \widehat{Q}_{n}(s,a) \nonumber
\\&+\alpha_n(s,a)\delta_n(s,a) \mathbb{I}(s=s_n \text{ and }a=a_n),
\end{align}
where $\alpha_n(s,a)$ is the learning rate, $$\delta_n(s,a)=r_n+\gamma \max \limits_{b \in \mathcal{A}}\widehat{Q}_n(s',b)-\widehat{Q}_n(s,a),$$ and $\mathbb{I}(\cdot)$ is the indicator function that is defined as 

\[ \mathbb{I}(\mathcal{B})=
   \begin{cases} 
      1, & \text{if }\mathcal{B} \text{ is true}\\
      0, & \text{ otherwise} 
   \end{cases}
\]
 The objective of Q-learning is to use the estimated state-action values in order to determine an approximate optimal policy $\pi^{\star}$ that maximizes (\ref{eq:Q}). 
 Although, the objective is to estimate $Q^{\pi^{\star}}$ under the optimal policy, one of the main features of Q-learning is that data can be collected under any other policy $\eta$,  called a {\em behavioral policy}. This off-policy nature  of Q-learning is convenient for developing an exploration strategy.   \par 
Now, most convergence proofs for Q-learning are based on the assumption that all states and actions are visited infinitely often \cite{Watkins,SA}.  In other words, when the behavioral policies do not guarantee sufficient exploration of the state-action space, then Q-learning will not necessarily converge. In this work, we will design a behavioral policy that ensures that all states and actions are visited infinitely often by leveraging cooperation among networked agents. 



\section{Exploration Strategy}
\label{sec:exploration}

The proposed multi-agent algorithm will involve a distributed realization of $K$ Q-learning schemes. Since we will be dealing with operations at multiple agents, we will now need to attach an agent index $k$ to the main variables, such as writing $r_n \to r_{k,n}$, $\widehat{Q}_{n} \to \widehat{Q}_{k,n}$.
We denote the history of transitions and rewards   experienced by agent $k$ up to time $n+1$  by \begin{equation}
    \mathcal{H}_{k,n}\triangleq \left \{s_{k,i}, a_{k,i},r_{k,i}\right\}_{i=0}^{n}.
\end{equation}
In the same token, we denote the aggregate history in the neighborhood of agent $k$ by the notation  
\begin{equation}
    \widetilde{\mathcal{H}}_{k,n}\triangleq \left\{\mathcal{H}_{\ell,n}\right\}_{\ell\in\mathcal{N}_k}.
\end{equation}
There is extensive discussion in the literature on the classical trade-off between exploration and exploitation of the state-action space. However, no clear measure of ``sufficient exploration'' exists. One popular measure is to rely on the use of regret bounds. In this paper, 
we deviate from this traditional view and establish ``sufficiency" of exploration by relying on the ability of the behavioral policy to ensure ``infinitely often visitations of all states and actions". To this end,  we will first introduce the design of the behavioral policy.  \par
The behavioral policy at every agent determines how the actions are selected by that agent as it traverses the state space. In this work, we model the behavioral policy  at agent $k$ and time $n$ with the following Boltzman distribution:
\begin{equation}
\label{eq:policy}
\eta_{k,n} \left(a|s\right)=\frac{\exp \left(\beta_{k,n}(s)\widetilde{Q}_{k,n}(s,a)\right)}{\sum_{b \in \mathcal{A}} \exp \left(\beta_{k,n}(s) \widetilde{Q}_{k,n}(s,b)\right)},
\end{equation}
where the notation $\widetilde{Q}(s,a)$ refers to an adjusted version of the estimated $Q$-value, namely, 
 $$\widetilde{Q}(s,a)\triangleq\sigma_{k,n} (s,a)+\widehat{Q}_{k,n}(s,a).$$ The bonus term  $\sigma_{k,n} (s,a)$ is used to motivate agent $k$ to explore not only rewarding states, but also states that have been visited less frequently. Obviously, we want $\sigma_{k,n}$ to be inversely related to the state-action visitation number: the less the state-action pair $(s,a)$ is  visited, the larger the value of $\sigma_{k,n}$ will be and the more the agent will be motivated to revisit this location. However, we will require this feature to be implemented without actual counting.
 
 For this purpose, we will compute  $\sigma_{k,n}(s,a)$ as a sample standard deviation over the Q-value estimates available to agent $k$ from its neighbours, i.e., $\{\widehat{Q}_{\ell,n}(s,a),\ \ell \in\mathcal{N}_k\}$. More formally,  
 
\begin{align}
\label{eq:std}
\sigma_{k,n}^2 (s,a)& \triangleq \frac{1}{N_k-1} \sum_{\ell\in \mathcal{N}_k}\left(\widehat{Q}_{\ell,n}(s,a)-\overline{Q}_{k,n}(s,a)\right)^{2}, 
\end{align}
where 
\begin{align}
    \overline{Q}_{k,n}(s,a)& \triangleq \frac{1}{N_k}\sum\limits_{\ell\in \mathcal{N}_k}\widehat{Q}_{\ell, n}(s,a).\nonumber
\end{align}
Intuitively, when the state-action pair $(s,a)$ is insufficiently explored by agent $k$, i.e., the visitation number is small or zero, the state-action value estimates $\{\widehat{Q}_{\ell,n},\ \ell\in \mathcal{N}_k\}$ will poorly agree with each other resulting in a large standard deviation $\sigma_{k,n} (s,a)$.  
Repeated visitations drive the state-action values within the neighborhood closer to each other and, therefore, the bonus term $\sigma_{k,n}(s,a)$ will become smaller. 
The form of the exploration bonus in (\ref{eq:std}) is inspired from the single-agent case \cite{TDU}.

\par 
The parameter $\beta_{k,n}$, which controls  the shape of the Boltzman distribution in \eqref{eq:policy}, is explicitly defined as 
 \begin{equation}
\label{eq:beta}
  \beta_{k,n}(s) \triangleq \frac{1}{D(s)}\ln \log_{\alpha}\left[\frac{(\sigma_q)^{2A}(N_k)^{-A}}{\prod \limits_{b\in \mathcal{A}} \mathbb{E} \left[\sigma_{k,n}^2(s,b)|\widetilde{\mathcal{H}}_{k,n}\right ]}\right]^{\oplus},
\end{equation} 
where 
\begin{equation}
   \sigma_q\triangleq\mathbb{V}(\widehat{Q}_{k,0}), \quad D(s) \triangleq \max \limits_{a,b\in \mathcal{A}}\left|\widetilde{Q}_{t}\left(s,a\right)-\widetilde{Q}_{t}(s, b)\right|,
\end{equation}
$0 < \alpha \leq 1/4$, and the operator $[\cdot]^{\oplus}$ is defined by:
    \[ [\psi]^{\oplus}=
   \begin{cases} 
      \psi, & \text{if }\psi\geq \alpha\\
      \alpha, & \text{ otherwise}. 
   \end{cases}
\]
We remark that in \eqref{eq:beta} and all further equations, unless otherwise stated, we will assume statistical operations such as \textit{expectation} ($\mathbb{E}$), \textit{variance} ($\mathbb{V}$), and \textit{covariance} ($\mathbb{C}\mathrm{ov}$) are, by default,  applied with respect to the initial state-action value estimates available at the neighbourhood $\mathcal{N}_k$, i.e., $\left\{\widehat{Q}_{\ell,0}(s,a),\ell\in \mathcal{N}_k \right\}, \forall s\in \mathcal{S}, a\in \mathcal{A}$.\par 

The above choice of $\beta_{k,n}$ allows us to guarantee the convergence of the proposed algorithm, as will become clear when we discuss Theorem 1. Note that the behavioral policies of the agents are based solely on local information from their neighborhoods. This fact and the assumption that actions are  decoupled imply that the exploration process in our algorithm takes place in a fully decentralized manner. \par
 Due to the expectation operator, we cannot implement (\ref{eq:beta}) in its exact form. To this end, we use the unbiased approximation:
 \begin{equation}
 \label{eq:approximation}
     \mathbb{E} \left[\sigma_{k,n}^2(s,a)|\widetilde{\mathcal{H}}_{k,n}\right ]\approx \sigma_{k,n}^2(s,a)
 \end{equation}
 whose accuracy is evaluated in Lemma \ref{lemma22:app_error}. Therefore, we arrive at the following listing for the Graph Exploration Algorithm (GEA) under Q-learning.

\begin{algorithm}
\SetAlgoLined
{\small
Initialize $\widehat{Q}_{k,0}(s,a)\sim \mathcal{M}$, $\forall k \in \mathcal{K}, s\in \mathcal{S}, a \in {A}$\;
\For{n=0,1,2....}{

  \For{all $k\in \mathcal{K
  }$}{
   observe own and neighbours' states $\{s_{\ell,n}\}_{\ell\in \mathcal{N}_{k}}$\;
   receive $\widehat{Q}_{\ell,n}(s_{k,n},a_{k,n})$ from $\ell \in\mathcal{N}_{k}$\;
   compute $\sigma_{k,n}(s_{k,n},a)$, $\forall a\in \mathcal{A}$ using (\ref{eq:std})\;
   take action $a_{k,n} \sim  \eta_{k,n} \left(a| s_{k,n}\right)$, given by               (\ref{eq:policy})\;
observe $r_{k,n}$ and $s_{k,n+1}$\;
 compute $\delta_{k,n}=r_{k,n}+ \max \limits_{b\in \mathcal{A}}\widehat{Q}_{k,n}(s_{k,n+1},b)-\widehat{Q}_{k,n}(s_{k,n},a_{k,n})$\;
update  $\widehat{Q}_{k,n+1}(s_{k,n},a_{k,n})=\widehat{Q}_{k,n}(s_{k,n},a_{k,n})+\alpha_{k,n}(s_{k,n},a_{k,n}) \delta_{k,n} $\;  
}

 }
 \caption{Multi-agent $Q$-learning with cooperative exploration for discrete state spaces (GEA).}
 }
 \label{Algorithm1}
\end{algorithm}

\subsection{ Asymptotic behavior of $\eta_{k,n}$}
In the sequel, we provide some well-known characteristics of the Boltzman distribution that could be useful for further analysis and implementation. For large $\beta_{k,n}$, the Boltzman distribution is equivalent to the greedy policy \cite{SARSA}, which can be expressed as:
\begin{equation}
\label{eq:asympt_eta1}
    \lim\limits_{\beta_{k,n}\to\infty}\eta_{k,n}=\arg\max\limits_{a\in \mathcal{A}} \widetilde{Q}_{k,n}(s_{k,n},a)
\end{equation}
 From \eqref{eq:beta}, it can be observed that $\beta_{k,n}$  approaches infinity when the estimate uncertainties tend to zero. Since the uncertainties are expected to vanish with the evolution of Q-learning, the limit in \eqref{eq:asympt_eta1} can be rewritten as:   
\begin{equation}
   \label{eq:asympt_eta2} \eta_{k,\infty}\triangleq\lim\limits_{n\to\infty}\eta_{k,n}=\arg\max\limits_{a\in \mathcal{A}} \lim\limits_{n\to\infty}\widetilde{Q}_{k,n}(s_{k,n},a)
\end{equation}
In contrast, when $\beta_{k,n}$ is zero, actions are selected  uniformly at random.
\section{Analysis and Discussions}
Most exploration algorithms lack theoretical guarantees, and the ones with theoretical results mainly rely on counting approaches. Theorem \ref{theorem1} states our first main contribution, where we guarantee convergence to the optimal policy under the proposed exploration mechanism.  First, we list some common assumptions.
\begin{assumption}[\textbf{Learning rate}]
\label{assumption:step}
Learning step sizes $\alpha_{k,n}(s,a)$,  for all states $s\in\mathcal{S}$, actions $a\in\mathcal{A}$, and agents  $k\in \mathcal{K}$, satisfy
\begin{align}
    &\sum \limits_{n=0}^\infty \alpha_{k,n}(s,a)=\infty,\nonumber\quad \quad \sum \limits_{n=0}^\infty \alpha_{k,n}^2(s,a)<\infty.\nonumber
\end{align} 
\end{assumption}
\begin{assumption}[\textbf{Initialization of Q-values}]
\label{assumption:initial}
    Q-value estimates for all agents $k\in \mathcal{K}$  are initialized  independently according to some distribution $\mathcal{M}$ with bounded support, i.e., $Q_{k,0}(s,a)<\infty$, and $\mathbb{E}\left[Q_{k,0}(s,a)\right]=0$, $s\in\mathcal{S}, a\in\mathcal{A}$. \par 
\end{assumption}

\begin{assumption}[\textbf{Transition probability}]
\label{assumption:trans}
 For any two subsets $\mathcal{S}_1\in \mathcal{S}$  and $\mathcal{S}_2\in\mathcal{S}$, there exist states $s_1\in \mathcal{S}_1$, $s_2\in \mathcal{S}_2$ and action $a \in \mathcal{A}$ such that $\mathcal{P}(s_2|s_1,a)>0$. \par  
 
\end{assumption}
\begin{assumption}[\textbf{Rewards}]
\label{assumption:reward}
 We assume that rewards are bounded, say, as   $r_{\min } \leq R\left(s, a, s^{\prime}\right) \leq r_{\max }, \forall s,s'\in\mathcal{S}$ and $ \forall a\in \mathcal{A}$. \par
 \QEDA
\end{assumption} 

Assumption \ref{assumption:trans} implies that any state in the MDP can be reached from any other state in a finite number of transitions. This assumption is required to ensure that all states are visited infinitely often.  

\begin{restatable}[\textbf{Convergence to the optimal policy}]{thm}{Ainur}
\label{theorem1}
Consider a discrete MDP with finite action and state spaces. Under Assumptions $1$-$4$, and under the exact construction (\ref{eq:beta}) for $\beta_{k,n}(s)$, the graph exploration algorithm   converges to the optimal policy $\pi^{\star}$ that maximizes \eqref{eq:Q}.

\end{restatable}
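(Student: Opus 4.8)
The plan is to reduce the statement to the classical convergence theorem for tabular off-policy Q-learning and then verify its hypotheses. That theorem (Watkins--Dayan, Tsitsiklis, Jaakkola--Jordan--Singh) gives $\widehat{Q}_{k,n}(s,a)\to Q^{\star}(s,a)$ almost surely for each agent $k$ provided the step sizes obey Assumption~\ref{assumption:step}, the iterates $\widehat{Q}_{k,n}$ stay uniformly bounded (which follows from Assumption~\ref{assumption:reward} together with the bounded initialization of Assumption~\ref{assumption:initial} and $\gamma\in(0,1)$), and every pair $(s,a)$ is selected infinitely often by agent $k$. Since the $K$ MDPs are identical and each agent's update uses only its own trajectory, this can be invoked agent by agent; the only coupling is through the exploration bonus $\sigma_{k,n}$, which enters the behavioral policy but not the $Q$-update. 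Once $\widehat{Q}_{k,n}\to Q^{\star}$ holds for all agents, the neighbourhood sample standard deviations $\sigma_{k,n}(s,a)$ in \eqref{eq:std} vanish a.s., so $\widetilde{Q}_{k,n}\to Q^{\star}$, and by \eqref{eq:asympt_eta1}--\eqref{eq:asympt_eta2} the behavioral policy becomes asymptotically greedy with respect to $Q^{\star}$; equivalently the greedy policy extracted from $\widehat{Q}_{k,n}$ equals $\pi^{\star}$ for all large $n$. So the whole problem collapses to the infinitely-often visitation claim.

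For that, I would first record two facts about \eqref{eq:policy}. First, $\eta_{k,n}(a|s)>0$ for every $a$ and every finite $n$: the $[\cdot]^{\oplus}$ clamp keeps the argument of the logarithms in \eqref{eq:beta} bounded away from its degenerate value, and the conditional expectations $\mathbb{E}[\sigma_{k,n}^{2}(s,b)\mid\widetilde{\mathcal{H}}_{k,n}]$ are strictly positive for $N_k\ge 2$ because the initial estimates are affine inputs to the $Q$-recursion, so $\beta_{k,n}(s)$ is finite. Second, the elementary bound
\[
\eta_{k,n}(a|s)\ \ge\ \tfrac1A\,\exp\!\big(-\beta_{k,n}(s)\,D(s)\big),
\]
which follows from $\widetilde{Q}_{k,n}(s,a)\ge\max_{b}\widetilde{Q}_{k,n}(s,b)-D(s)$ and $\sum_{b}\exp(\cdot)\le A\exp(\max_{b}(\cdot))$. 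The purpose of the precise form \eqref{eq:beta} is now apparent: the prefactor $1/D(s)$ cancels the $D(s)$ in the exponent, turning the bound into $\eta_{k,n}(a|s)\ge \tfrac1A\big(\log_{\alpha}[\cdot]^{\oplus}\big)^{-1}$, i.e., a quantity that shrinks only logarithmically in the variance ratio even though $\beta_{k,n}\to\infty$. This is exactly the mechanism reconciling persistent exploration with asymptotic greediness: $\eta_{k,n}$ converges to a point mass while $\sum_{n}\eta_{k,n}(a|s)$ still diverges along any subsequence of times at which $s$ is occupied.

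With these in hand I would run the standard recurrence argument for GLIE-type policies, phrased via a conditional (Lévy/extended) Borel--Cantelli lemma suited to the nonstationary, history-dependent policy. Let $\mathcal{F}_n$ be the $\sigma$-field generated by all agents' trajectories and initializations through time $n$, fix $(s,a)$ and $k$, and suppose for contradiction that with positive probability $(s,a)$ is visited only finitely often. On that event there is a last visit; thereafter, using the lower bound above and Assumption~\ref{assumption:trans} (from any occupied state a positive-probability action sequence of length at most $S$ reaches $s$, after which $a$ is chosen with positive probability), the $\mathcal{F}_n$-conditional probability of a further visit within $S$ steps is bounded below by a strictly positive, only logarithmically decaying quantity. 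Summing gives a divergent series, so the conditional Borel--Cantelli lemma forces infinitely many further visits --- a contradiction. Applying this to every $(s,a)$ and every $k$ closes the reduction.

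I expect the main obstacle to be making the "logarithmically decaying lower bound" genuinely quantitative in the networked setting, i.e., establishing a uniform-in-$n$, polynomial-in-visitation-count upper bound on $\mathbb{E}[\sigma_{k,n}^{2}(s,b)\mid\widetilde{\mathcal{H}}_{k,n}]$ so that $\beta_{k,n}(s)$ grows at most logarithmically. This couples the agents: agent $k$'s bonus depends on its neighbours' $Q$-errors, whose decay depends on their own visitation rates, and so on around the graph. Resolving it cleanly will likely require a bootstrapping argument over the network --- or a simultaneous induction showing all agents visit all pairs at comparable asymptotic rates --- combined with the uniform boundedness of the $Q$-iterates from Assumptions~\ref{assumption:initial} and \ref{assumption:reward}. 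The remaining ingredients --- the explicit policy lower bound, the reachability step from Assumption~\ref{assumption:trans}, and the appeal to the classical Q-learning theorem --- are routine by comparison.
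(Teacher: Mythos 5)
Your outer scaffolding matches the paper's proof closely: reduce the theorem to infinitely-often visitation of every $(s,a)$ by every agent (the paper invokes Lemma 4 of the SARSA convergence analysis rather than re-deriving a conditional Borel--Cantelli recurrence, but the content is the same), lower-bound the Boltzmann policy by $\eta_{k,n}(a|s)\ge A^{-1}\exp\left(-\beta_{k,n}(s)D(s)\right)$, observe that the $1/D(s)$ prefactor in \eqref{eq:beta} cancels $D(s)$, and conclude that everything hinges on showing $\beta_{k,n}(s)\le \ln c_{k,n}(s)/D(s)$, so that $\eta_{k,n}(a|s)\ge (A\,c_{k,n}(s))^{-1}$ and the sums over visit times diverge. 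Up to that point you have essentially reconstructed the argument in Appendix A.

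The genuine gap is the step you explicitly defer: verifying that the particular $\beta_{k,n}$ of \eqref{eq:beta} obeys that logarithmic bound. You frame what is needed as a polynomial-in-count \emph{upper} bound on $\mathbb{E}[\sigma_{k,n}^{2}(s,b)\mid\widetilde{\mathcal{H}}_{k,n}]$ and anticipate a bootstrapping or simultaneous-induction argument over the network coupling the agents' visitation rates. Neither is what closes the proof, and the direction of the bound is wrong: since $\beta_{k,n}$ blows up as the conditional expected sample variance shrinks, what is required is a \emph{lower} bound, namely $\mathbb{E}[\sigma_{k,n}^{2}(s,a)\mid\widetilde{\mathcal{H}}_{k,n}]\ge \alpha^{c_{k,n}(s,a)}\sigma_q^{2}/N_k$ (Lemma 3 of the appendix); applying the decreasing map $\log_\alpha$ and summing over actions then turns this into $\log_\alpha[\cdot]\le c_{k,n}(s)$ and hence into \eqref{eq:beta_condition}. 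Moreover, the mechanism behind that lower bound is purely local and needs no coupling across the graph: conditioned on the history, the only randomness left in $\widehat{Q}_{\ell,n}(s,a)$ is its initialization, whose coefficient in the unrolled Q-recursion is $\prod_{j}(1-\alpha_j)\ge\alpha^{c_{\ell,n}(s,a)}$; the cross-covariances between the initialization and the bootstrapped max-terms are shown to be nonnegative by induction on $n$ (Lemma 2), which is exactly what prevents the $\max$ operator from cancelling the retained initialization variance and yields $\mathbb{V}(\widehat{Q}_{\ell,n}(s,a)\mid\mathcal{H}_{\ell,n})\ge\alpha^{2c}\sigma_q^{2}$ (Lemma 1); and the independence of the initializations across agents (Assumption 2) transfers this to the neighborhood sample variance. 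Your remark that the conditional expectations are merely ``strictly positive'' is not enough --- without the quantitative rate $\alpha^{c}$ and the nonnegative-covariance induction, the claim that $\beta_{k,n}$ grows no faster than $\ln c_{k,n}(s)/D(s)$ is unsupported and the proof does not close.
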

\begin{proof}
See Appendix \ref{section:theorem1}. In the proof, we extend arguments used for traditional Q-learning, but omit the common assumption that all states and actions are visited infinitely often. This is because the argument in the appendix shows that this property is guaranteed by the proposed exploration policy. \par 
\end{proof}
As already mentioned in \eqref{eq:approximation}, the implementation of the  algorithm replaces the expectation appearing in \eqref{eq:beta} by the sample approximation $\sigma_{k,n}^2$.  
 When $\widetilde{H}_{k,n}$ is given, randomness in $\sigma_{k,n}^2$ is only due to the random initialization of the Q-estimates. Therefore, with the evolution of the Q-learning algorithm, the effect of the random initializations diminishes bringing the approximation and the exact value closer to each other. Lemma \ref{lemma22:app_error} provides the asymptotic rate of convergence for the approximation error.     
\begin{restatable}[\textbf{Assymptotic approximation error}]{lemma}{Ainurl}
\label{lemma22:app_error}
 Let $p_{k}(s,a)$ denote 
state-action visitation probabilities induced by the asymptotic behavioral policy $\eta_{k,\infty}(a|s)$ \eqref{eq:asympt_eta2}, defined in \eqref{eq:asympt_eta2}. Also let  
\begin{align}
 p_{\min}&\triangleq \min\limits_{k\in \mathcal{K},s\in\mathcal{S},a\in\mathcal{A}}p_k(s,a),\\
 p_{\max}&\triangleq \max\limits_{k\in \mathcal{K},s\in\mathcal{S},a\in\mathcal{A}}p_k(s,a).
\end{align}
 Then, for all states $s\in\mathcal{S}$, actions $a\in\mathcal{A}$, agents $k\in\mathcal{K}$ and $n\to \infty$, with at least probability $(1-p_{\epsilon})$, it holds that
\begin{equation}
\label{eq:finite_time}
\Bigl|\sigma_{k,n}^2-\mathbb{E}\left (\left.\sigma_{k,n}^2\right|\widetilde{\mathcal{H}}_{k,n}\right)\Bigr| \leq \frac{B'N_k}{(N_k-1)p_{\epsilon}}\left(\frac{1}{n}\right)^{\frac{2p_{\min}}{p_{\max}}\gamma},
\end{equation}
 where  $0<B'<\infty$. 
\end{restatable}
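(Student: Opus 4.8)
The plan is to condition throughout on the neighbourhood history $\widetilde{\mathcal{H}}_{k,n}$, so that the only remaining randomness in $\sigma_{k,n}^2(s,a)$ from \eqref{eq:std} is the collection of initial estimates $\{\widehat{Q}_{\ell,0}(\cdot,\cdot)\}_{\ell\in\mathcal{N}_k}$, which by Assumption \ref{assumption:initial} are independent across $\ell$, zero-mean, and of bounded support. Writing $X_\ell\triangleq\widehat{Q}_{\ell,n}(s,a)$, $\mu_\ell\triangleq\mathbb{E}[X_\ell\mid\widetilde{\mathcal{H}}_{k,n}]$, $Y_\ell\triangleq X_\ell-\mu_\ell$, and $\bar\mu,\bar Y$ for the neighbourhood averages, I would first record the elementary identity obtained by expanding the sample variance in \eqref{eq:std} as a quadratic form:
\begin{equation*}
\sigma_{k,n}^2-\mathbb{E}\!\left[\sigma_{k,n}^2\mid\widetilde{\mathcal{H}}_{k,n}\right]=\frac{1}{N_k-1}\!\left(2\sum_{\ell\in\mathcal{N}_k}(\mu_\ell-\bar\mu)Y_\ell+\sum_{\ell\in\mathcal{N}_k}(Y_\ell-\bar Y)^2-\sum_{\ell\in\mathcal{N}_k}\mathbb{E}\!\left[(Y_\ell-\bar Y)^2\mid\widetilde{\mathcal{H}}_{k,n}\right]\right).
\end{equation*}
The point is that the common ``bias-spread'' term $\sum_\ell(\mu_\ell-\bar\mu)^2$ cancels between $\sigma_{k,n}^2$ and its conditional mean, so the deviation is controlled entirely by the fluctuations $Y_\ell$ and the mean gaps $\mu_\ell-\bar\mu$; it therefore suffices to bound $|Y_\ell|$ and $|\mu_\ell-\bar\mu|$ at the stated rate.

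The core estimate is a \emph{forgetting-of-initialization} bound for Q-learning. Given $\widetilde{\mathcal{H}}_{k,n}$, compare the run \eqref{eq:Q_update} of agent $\ell$ started from its actual initialization with the same run (same visited pairs, rewards, step sizes) started from a generic second initialization in the support of $\mathcal{M}$; the difference $\Delta_{\ell,n}$ satisfies $\Delta_{\ell,n+1}(s,a)=(1-\alpha_{\ell,n})\Delta_{\ell,n}(s,a)+\alpha_{\ell,n}\gamma\bigl(\max_b\widehat{Q}_{\ell,n}(s',b)-\max_b\widehat{Q}'_{\ell,n}(s',b)\bigr)$ at visited pairs and is unchanged elsewhere, so $\|\Delta_{\ell,n}\|_\infty$ is non-increasing, and iterating the $\gamma$-contraction of the Bellman backup together with Assumption \ref{assumption:step} (so that $\prod_i(1-\alpha_{\ell,i})$ over the visits to each pair vanishes) forces polynomial decay at a rate set by the \emph{least-visited} state-action pair. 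Since $\mathbb{E}[\widehat{Q}_{\ell,0}]=0$, we have $|Y_\ell|\le\sup\|\Delta_{\ell,n}\|_\infty$ over second initializations, hence $|Y_\ell|$ inherits this decay. To convert ``elapsed time $n$'' into ``number of visits'', I would invoke Assumption \ref{assumption:trans} (irreducibility) together with $\eta_{k,n}\to\eta_{k,\infty}$ from \eqref{eq:asympt_eta2}: the induced chain is asymptotically ergodic with visitation frequencies $p_\ell(s',a')\in[p_{\min},p_{\max}]$, so on an event of probability tending to one as $n\to\infty$ every pair is visited at least a constant multiple of $p_{\min}n$ times, which yields a bound $|Y_\ell|\le \mathrm{const}\cdot(1/n)^{p_{\min}\gamma/p_{\max}}$ (the step-size schedule and the $\gamma$-factor from the bootstrap accounting producing the exponent in \eqref{eq:finite_time}).

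For the mean gaps, note $|\mu_\ell-\bar\mu|\le\max_{j\in\mathcal{N}_k}\mathbb{E}\bigl[\,|\widehat{Q}_{\ell,n}(s,a)-\widehat{Q}_{j,n}(s,a)|\mid\widetilde{\mathcal{H}}_{k,n}\bigr]$, and since by Theorem \ref{theorem1} every $\widehat{Q}_{j,n}(s,a)$ converges to the \emph{same} optimal value (the $K$ MDPs being identical), this gap is of the same order as $|Y_\ell|$: the initialization part is handled by the forgetting estimate above, and the sample-path part by the finite-time convergence rate of Q-learning, which degrades in $\gamma$ consistently with the exponent in \eqref{eq:finite_time}. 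Substituting all of this into the displayed identity, the quadratic terms dominate and are of order $N_k(1/n)^{2p_{\min}\gamma/p_{\max}}$, the cross term $\sum_\ell(\mu_\ell-\bar\mu)Y_\ell$ being of the same order; after dividing by $N_k-1$ and absorbing $r_{\min},r_{\max}$ (Assumption \ref{assumption:reward}), the support bound of $\mathcal{M}$, and $\gamma$ into a finite constant, this gives $\mathbb{E}\,\bigl|\sigma_{k,n}^2-\mathbb{E}[\sigma_{k,n}^2\mid\widetilde{\mathcal{H}}_{k,n}]\bigr|\le\frac{B'N_k}{N_k-1}(1/n)^{2p_{\min}\gamma/p_{\max}}$ as $n\to\infty$. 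A final application of Markov's inequality with threshold $\frac{B'N_k}{(N_k-1)p_\epsilon}(1/n)^{2p_{\min}\gamma/p_{\max}}$ then produces \eqref{eq:finite_time} with probability at least $1-p_\epsilon$.

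The main obstacle I expect is the middle step: pinning down the precise polynomial exponent of the forgetting rate in the presence of bootstrapping, since the $\max$-operator couples all coordinates of $\widehat{Q}_{\ell,n}$ and the decay at one pair is contaminated by the decay at others, and at the same time justifying that the still-drifting behavioural policy delivers visitation frequencies close enough to $p_\ell(s',a')$ to legitimize the $p_{\min}/p_{\max}$ exponent. Controlling $|\mu_\ell-\bar\mu|$ at the \emph{same} (squared) rate rather than only at the slower worst-case Q-learning bias rate is the other delicate point, and it is exactly where Theorem \ref{theorem1} and the identical-MDP assumption are indispensable.
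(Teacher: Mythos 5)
Your overall skeleton matches the paper's: condition on $\widetilde{\mathcal{H}}_{k,n}$ so that the only residual randomness in $\sigma_{k,n}^2$ comes from the initializations, reduce the problem to showing that every $\widehat{Q}_{\ell,n}(s,a)$ in the neighbourhood lies within $O\bigl((1/n)^{p_{\min}\gamma/p_{\max}}\bigr)$ of a common value, square this to obtain the exponent $2p_{\min}\gamma/p_{\max}$, and finish with a Markov-type inequality to produce the $N_k/\bigl((N_k-1)p_\epsilon\bigr)$ factor. The paper does this more directly than you do: it bounds $|\widehat{Q}_{\ell,n}(s,a)-\overline{Q}_{k,n}(s,a)|$ by the triangle inequality through $Q^{*}(s,a)$ as in \eqref{eq:q_Eq}, which handles your fluctuation term $Y_\ell$ and your mean-gap term $\mu_\ell-\bar\mu$ in a single stroke, and then applies Chebyshev's inequality to the conditional variance of $\sigma_{k,n}^2$ in \eqref{eq:Prob_error} rather than Markov's inequality to the first absolute moment. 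Your quadratic-form identity and the cancellation of $\sum_\ell(\mu_\ell-\bar\mu)^2$ are correct but add nothing: once each $\widehat{Q}_{\ell,n}$ is pinned to $Q^{*}$ at the stated rate, the decomposition into bias and fluctuation is superfluous.

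The genuine gap is exactly the step you flag yourself. The paper does not derive the rate $(1/n)^{p_{k,\min}\gamma/p_{k,\max}}$; it imports it wholesale as Lemma \ref{lemma:Q_rate}, the known asymptotic convergence rate of tabular Q-learning under a stationary behavioural policy, cited from the literature. Your two-initialization coupling cannot substitute for this: it controls the forgetting of the initialization, i.e.\ the term $Y_\ell$, but says nothing quantitative about the speed at which the conditional means $\mu_\ell$ approach $Q^{*}$ --- for that you would still need precisely the finite-time or asymptotic rate result you are trying to avoid, and invoking Theorem \ref{theorem1} supplies convergence without any rate. The exponent $p_{\min}\gamma/p_{\max}$ is a property of the polynomial analysis in the cited reference, not something the max-operator coupling together with Assumptions \ref{assumption:step} and \ref{assumption:trans} produces on its own. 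So the proposal is sound in architecture but incomplete at its load-bearing step, which in the paper is a citation rather than a proof; if you grant yourself Lemma \ref{lemma:Q_rate}, the rest of your argument closes and is essentially equivalent to the paper's, modulo the choice of Markov versus Chebyshev.
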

\begin{proof}
    See Appendix \ref{sec:approximation}.
\end{proof}
Note that the bound in \eqref{eq:finite_time} depends on the probabilities of visitations $p_{k}(s,a)$, which are dictated by the behavioral policies $\eta_{k,\infty}(a|s)$. Since the proposed behavioral policy is designed to ensure infinitely often visitation of all states and actions it is reasonable to assume that  the $p_k(s,a)$ values are strictly positive. \par  
The bound in \eqref{eq:finite_time} is asymptotic since a finite-time analysis of standard Q-learning is in itself a challenging problem \cite{Finite_q_learning, Finite_time_q2}.  For environments that are highly sensitive to the approximation error and require that the error on the left-hand side of (\eqref{eq:finite_time}) be well-bounded in finite time,  a possible solution is averaging over multiple realizations. Namely, agents can learn and share multiple Q-estimates  using Q-learning bootstrapping methods\cite{TDU, Bootstrapping1, Bootstrapping2} and use these estimates to generate multiple realizations of $\sigma_{k,n}^2$, which, in turn, are averaged to have a better approximation for $\mathbb{E}(\sigma_{k,n}^2|\widetilde{\mathcal{H}}_{k,n})$. Then, the approximation error will scale inversely proportional to the number of per agent Q-estimates, by \textit{central limit theorem}. \par 

\begin{figure*}
\begin{minipage}[b]{.48\linewidth}
  \centering
  \centerline{\includegraphics[width=7.0cm]{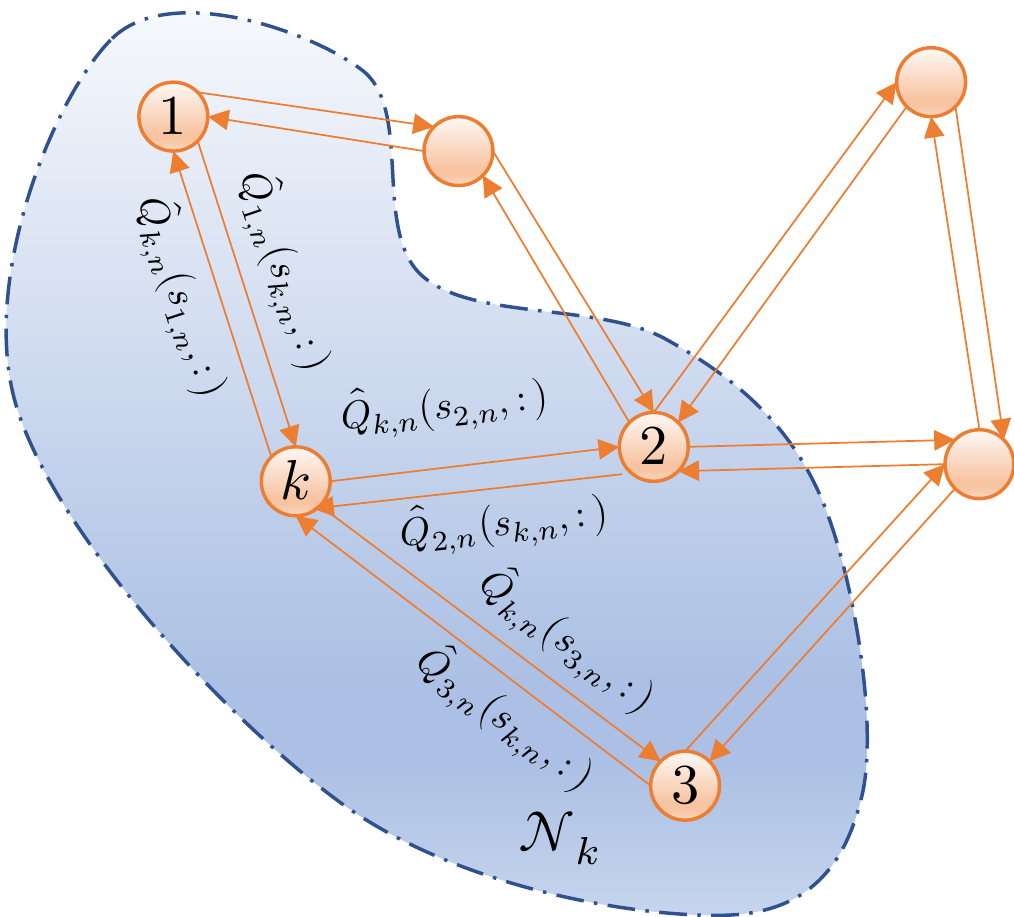}}
  \centerline{(a) Discrete states}\medskip
\end{minipage}
\hfill
\begin{minipage}[b]{0.48\linewidth}
  \centering
  \centerline{\includegraphics[width=7.0cm]{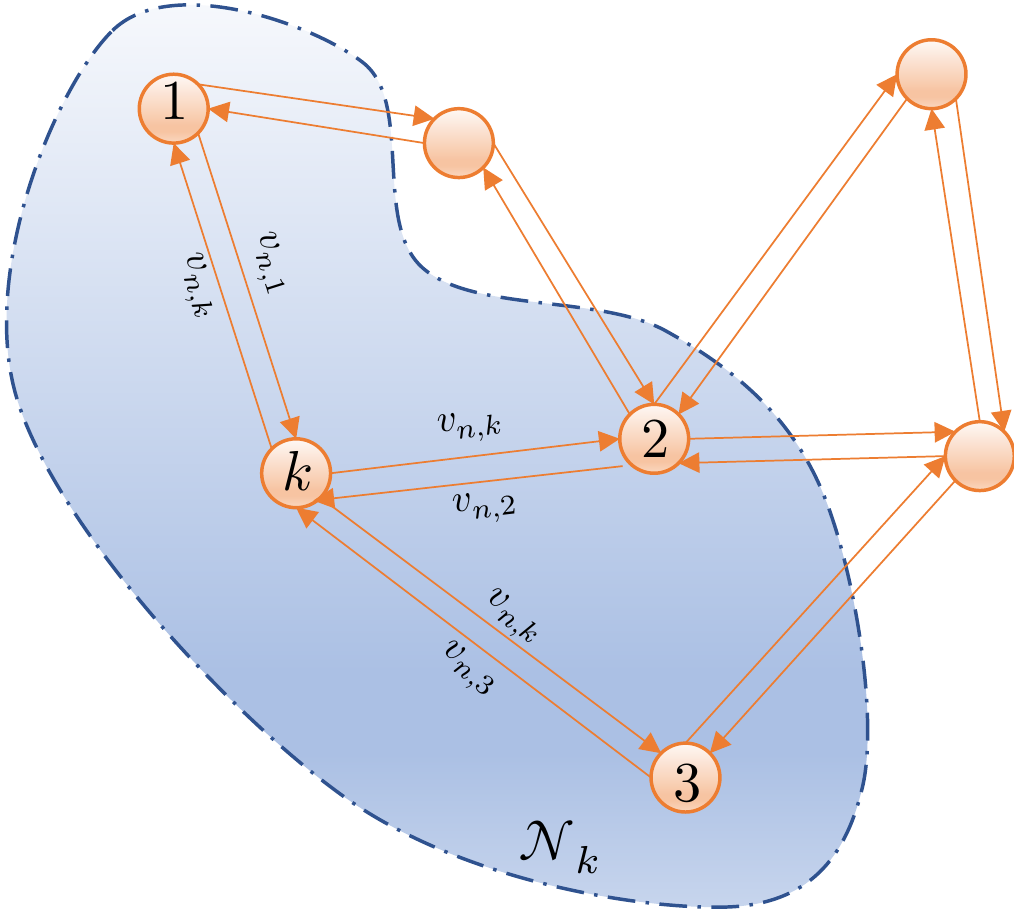}}
  \centerline{(b) Continuous states}\medskip
   
\end{minipage}
\caption{Diagram showing the information exchanged with neighbors of agent $k$ considering a discrete-state  scenario (a) and a continuous-state scenario (b).}
\label{fig:model}
\end{figure*}
\subsection{Continuous states and function approximation}

Until now, all discussions  assumed that the state space $\mathcal{S}$ is countable. However, since the proposed graph exploration algorithm does not require counting, it is particularly suitable for {\em continuous} state spaces as well. In the sequel, we explain how the implementation of the graph exploration algorithm changes for continuous-state problems. \par  

For continuous states, a tabular Q-learning is not an option since agents cannot generally deal with an infinite number of state-action values corresponding to the infinite number of states. 
However,  we can assume that there exists a generalization model $Q_{v}$ that can approximate the relation between any state-actions and their optimal state-action values, i.e.: 
$$Q^{\pi^{\star}}(s,a)\approx Q_{v}(f(s,a); v^{\star}),$$ where $f(s,a)\in \mathbb{R}^d$ denotes a  feature vector representing the state-action $(s,a)$ and  $v^{\star}$ is some parameter vector for the approximation function. By learning only $v^{\star}$, the optimal state-action values can be rebuilt  according to  $Q_v$, for any states and actions, even if they are continuous. \par 
 Therefore, in Q-learning with function approximations, instead of the optimal Q-values, agents learn the corresponding optimal parameter vectors $v^{\star}$. The optimal parameter estimates  $v_{k,n}$  are updated similar to \eqref{eq:Q_update}, except that the learning step, in addition, is  multiplied by the gradient of the approximation function:
 $$v_{k,n+1}=v_{k,n}+\alpha_{k,n}(s_{k,n},a_{k,n}) \delta_{k,n} \nabla_{v}Q_{v}(f_{k,n};v_{k,n})$$
where  for simplicity of notation we let $f_{k,n}\triangleq f(s_{k,n},a_{k,n})$.  
These parameters are further shared with neighbours in order to compute the
 state-action value estimates using the approximation $\widehat{Q}_{\ell,n}(s,a)\approx Q_{v}(f(s,a);v_k)$, which, in turn, are used to compute exploration policies given in \eqref{eq:policy}. 


Apart from allowing continuous states, the use of the approximation model significantly unloads the communication process among agents. According to \eqref{eq:std} and \eqref{eq:beta}, to compute $\beta_{k,n}(s)$ for the observed state $s_{k,n}$, agent $k$  needs the estimates $\{\widehat{Q}_{\ell,k}(s_{k,n},a)\}_{a\in\mathcal{A}}$, from all its neighbours $\ell \in \mathcal{N}_{k}$. Therefore, for discrete states, agent $k$ receives in total $N_kA$ estimates. For continuous states, all estimates in $\{\widehat{Q}_{\ell,k}(s_{k,n},a)\}_{a\in\mathcal{A}}$ can be computed using a single parameter vector $v_{\ell,n}$, which reduces the total number of received estimates to $N_{k}$. For visual explanation, see the example in Fig. \ref{fig:model}\par 
Moreover, the use of approximations simplify the operational process. For the discrete-state case, an agent needs to observe the states of all its neighbours, use the Q-table to look for the estimates corresponding to the observed states, and send the values to the corresponding neighbours. Meanwhile, for the continuous-state case, an agent does not technically need to observe other agents since it can simply broadcast a single parameter $v_{k,n}$ to everyone nearby.


\begin{algorithm}

\SetAlgoLined
Initialize parameters $v_{k,0}$, $\forall k \in \mathcal{K}$\;
\For{n=0,1,2....}{

  \For{all $k\in \mathcal{K
  }$}{
   Observe own state $\{s_{i,n}\}_{i\in \mathcal{N}_{k}}$\;
   Receive $v_{j,n}$ from all neighbours $j \in \mathcal{N}_{k}$\;
   compute $\sigma_{k,n}(s_{k,n},a)$, $\forall a\in \mathcal{A}$ using (\ref{eq:std}) and the approximation $\widehat{Q}_{k,n}(s,a)\approx Q_v(f(s,a);v_{k,n})$\;
   take action $a_{k,n} \sim  \eta_{k,n} \left(a| s_{k,n}\right)$, given in               (\ref{eq:policy})\;
observe $r_{k,n}$ and $s_{k,n+1}$ \;
 compute $\delta_{k,n}=r_{k,n}+ \max \limits_{b\in \mathcal{A}}Q_v(f(s_{k,n+1},b);v_{k,n})-Q_v(f(s_{k,n},a_{k,n});v_{k,n})$\;
update  $v_{k,n+1}=v_{k,n}+\alpha_{k,n}(s_{k,n},a_{k,n}) \delta_{k,n} \nabla_{v}Q_{v}(f_{k,n};v_{k,n})$\;  
  }

 }
 \caption{Multi-agent $Q$-learning with cooperative exploration for continuous  state spaces (GEA)}
 \label{Algorithm2}
\end{algorithm}

Finally, it is important that we make the following remark regarding continuous state-space applications. Continuous state-space analysis is usually non-tractable due to the use of neural networks. If we decide to focus on the linear case for the sake of the theoretical analysis, there are actually not many realistic examples whose environment can be linearly modelled. Thus, by designing an algorithm based on the discrete state-space case, while targeting continuous state-space games, we are implicitly assuming that there is a correlation  between discrete and continuous cases. The algorithm, which converges under a discrete case analysis, should also work for continuous case if a good approximation model is chosen. The choice of approximation models and their evaluation is in itself a rich research topic, which deserves future study. In our work, we focus on providing experimental results that show the algorithm's good performance under continuous-state games.


\section{Experimental Results}
\begin{figure}[h]
\begin{minipage}[b]{1\linewidth}
  \centering
  \centerline{\includegraphics[width=7.0cm]{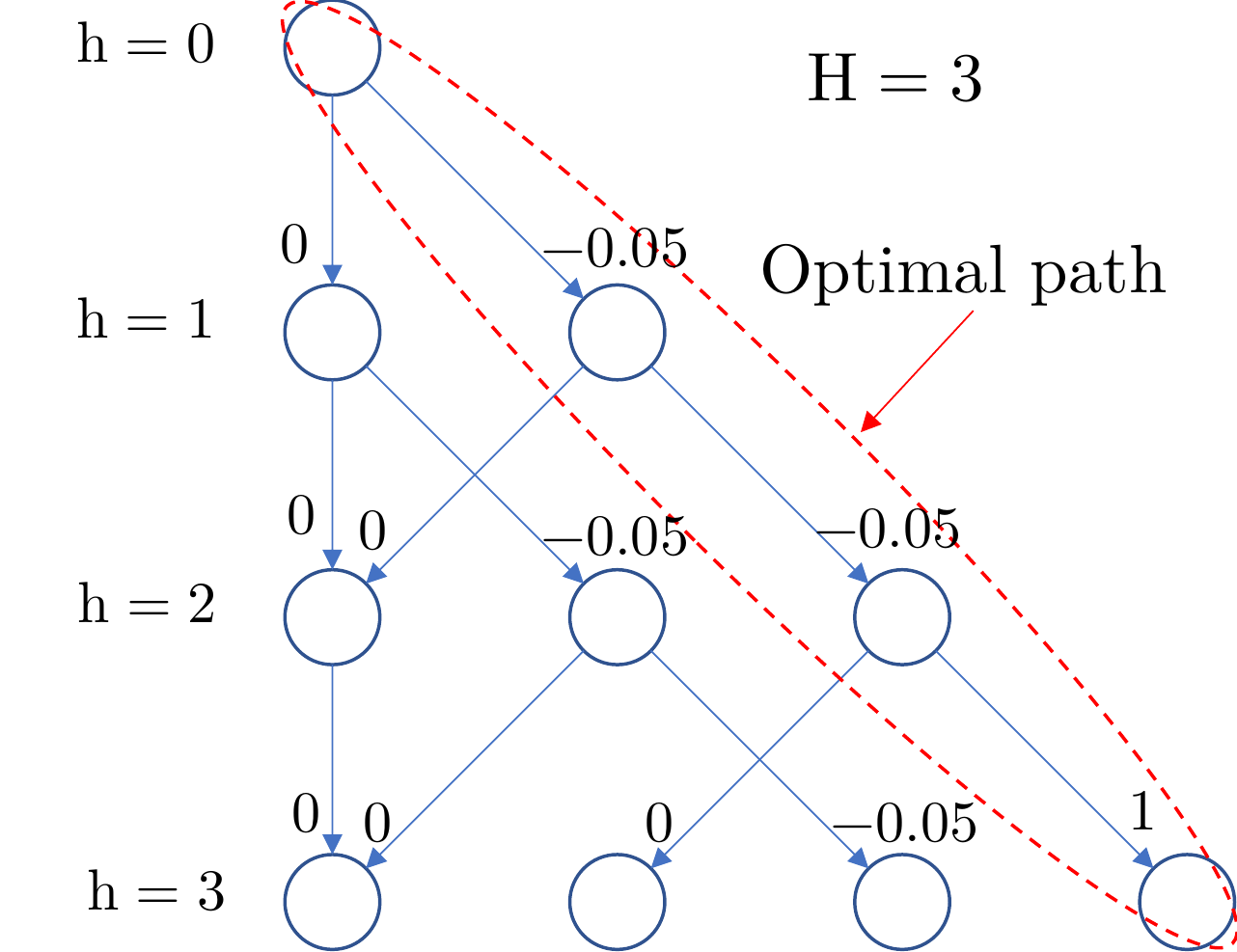}}
\end{minipage}
\caption{Diagram illustrating the deep sea game for a depth $H=3$ and the action space $\mathcal{A}=\{\text{RIGHT}, \text{LEFT}\}$. The arrows and numerical values above them illustrate the game's transition model and the reward function, respectively.}
\label{fig:deep_sea}
\end{figure}

\begin{figure*}[h]
\begin{minipage}[b]{0.5\linewidth}
  \centering
  \centerline{\includegraphics[width=10cm]{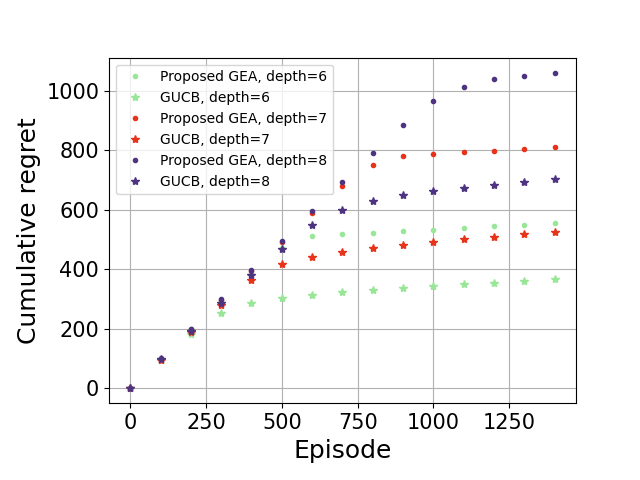}}
  \centerline{(a) GEA vs GUCB.  }\medskip
\end{minipage}
\begin{minipage}[b]{0.5\linewidth}
  \centering
  \centerline{\includegraphics[width=10cm]{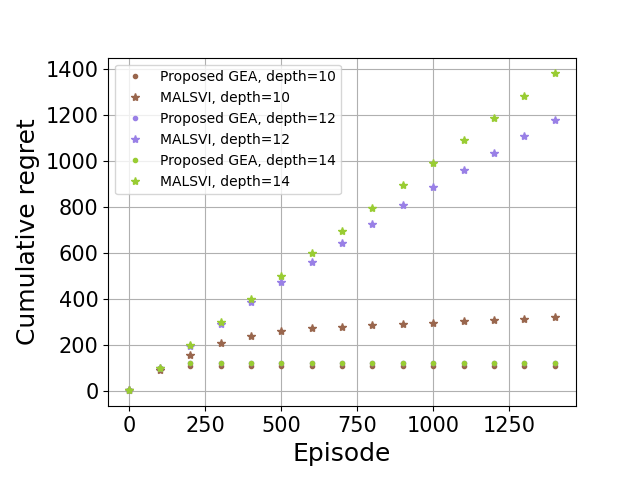}}
\centerline{(b) GEA vs MALSVI: original size.  }\medskip
\end{minipage}
\begin{minipage}[b]{0.5\linewidth}
  \centering
  \centerline{\includegraphics[width=10cm]{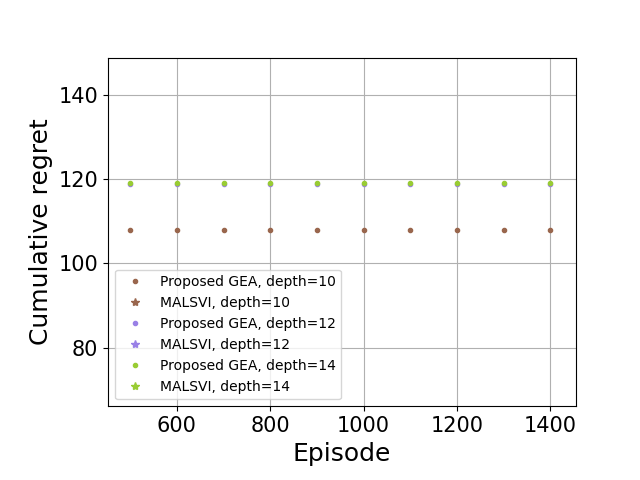}}
  \centerline{(c) GEA vs MALSVI: zoom 1. }\medskip
  
\end{minipage}
\begin{minipage}[b]{0.5\linewidth}
  \centering
  \centerline{\includegraphics[width=10cm]{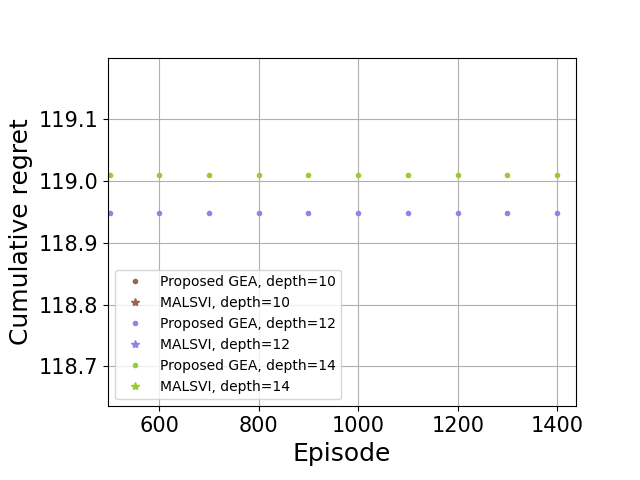}}
  \centerline{(d) GEA vs MALSVI: zoom 2. }\medskip
\end{minipage}
\caption{Regret values for several implementations, including the proposed GEA method, the discrete-state algorithm GUCB and the continuous-state algorithm MALSVI under changing sparsity and discrete states: (a) compares the proposed GEA and GUCB; (b)  compares the proposed GEA and MALSVI including results for all selected depth values;  (c) compares the proposed GEA and MALSVI zooming into regions where the performance of GEA is visible for $\operatorname{depth}=12$ and $\operatorname{depth}=14$; (d) compares the proposed GEA and MALSVI zooming into regions where the performance of GEA is visible for $\operatorname{depth}=10$ and $\operatorname{depth}=14$. }
\label{fig:res1}
\end{figure*}

Performance of  the proposed algorithm was tested on the \textit{deep sea} environment.  Deep sea, illustrated in Fig. \ref{fig:deep_sea},  is a challenging sparse reward environment where a positive reward can be achieved with one policy out of $2^H$ possible ones, where $H$ is the length of an episode. Moreover,  a deep sea player is deceived with  negative rewards for steps taken toward the optimal path. \par 

Following a common strategy in literature, the performance of the algorithm was tested based on the regret, which is formally defined in Definition \ref{definition:regret}. A bounded regret means that the behavioral policies of all agents can converge to the optimal one in finite time. The regret value at the convergence time shows how mistakenly agents have behaved before converging to the optimal solution. Also,  in our simulations, we vary the length of the game $H$ to see how the algorithm behaves with the changing sparsity. The relation between  sparsity of the environment and $H$ is direct: ``the deeper" ($H \uparrow$) the game is the further  the agents start  from the most rewarding state and the more paths with deceiving rewards will appear (sparsity $\uparrow$). 



\begin{definition}[\textbf{Regret expression}]
\label{definition:regret}The \textit{regret} at time instant $T$, for finite MDP with initial state $s_{k,0}$ for agent $k$, is given by  
\begin{equation}
\label{eq:regret}
\text{Regret}(T)=\frac{1}{K}\sum_{n=1}^{T}\sum_{k=1}^{K}\left(V^{\pi^\star}\left(s_{k,0}\right)-V^{\eta_{k,n}}\left(s_{k,0}\right)\right), 
\end{equation}
where $\pi^*$ is an optimal policy, which maximizes the expected total return of the game, $\eta_{k,n}$ is an actual policy followed by an agent at the $n$-th episode, and, under policy $\pi$, the state value  $V^{\pi}(s)$ is defined as 
\begin{align}
    V^{\pi}\left(s\right)\triangleq \mathbb{E}\left(\sum\limits_{n=0}^{\infty} \gamma^{n}r_{n}|s_0=s\right). \nonumber
\end{align} \QEDA
\end{definition}

In our illustrations, for better visualization, we duplicate the results by zooming over poorly-visible regions. Precisely, Fig \ref{fig:res1}b
is an original-size figure with regret plots for the proposed GEA and the MALSVI algorithm across all selected depth values, while Fig. \ref{fig:res1}c and Fig \ref{fig:res1}d zoom into regions where the performance of GEA is visible for specific depth values. In particular, Fig.\ref{fig:res1}c focuses on the performance of GEA for $\operatorname{depth}=12$ and $\operatorname{depth}=14$, while Fig.\ref{fig:res1}d focuses on the performance of GEA for $\operatorname{depth}=10$ and $\operatorname{depth}=14$.  \par 
Our algorithm is first compared with the GUCB approach, which is designed for discrete-state environments only. GUCB is limited for discrete-state cases only because its exploration bonus depends on the counting of state-action visitation frequencies:
\begin{equation}
\label{eq:b_GUCB}
b^{\text{\tiny{GUCB}}}_{k,n}(s,a)\triangleq\beta \sqrt{\frac{H^3\iota } {w_k c_{k,n}(s,a)}}
\end{equation}
where $\beta>0$ is a constant, $c_{k,n}(s,a)$ denotes the visitation frequency of state-action $(s,a)$ by agent $k$ before time $n$, $w_k$ is a fixed parameter that depends on the structure of the graph, and $\iota$ is a fixed parameter that depends on the properties of the MDP. From Fig. \ref{fig:res1}a, we can see that GUCB outperforms the proposed GEA algorithm in terms of total regret for all depth values. This outcome is anticipated, as counting confers an advantage to algorithms in accurately quantifying the uncertainty associated with states and actions. However, counting-based approaches limit the states to be discrete, which reduces the range of applicable problems. Therefore, we posit that the incurred disadvantage in terms of total regret is a reasonable compromise for adopting a non-counting-based approach in the proposed algorithm. \par 
Furthermore, it is imperative to not only assess the point of convergence of regret, but also the rate of convergence. Specifically, while an approach may achieve convergence to the optimal solution by exploring various sub-optimal paths, the time taken to achieve such convergence holds considerable significance. Given that the proposed bonus exhibits a higher level of stochasticity in comparison to traditional count-based mechanisms, it may potentially result in the selection of paths with elevated regret values. However, in terms of convergence time, our algorithm can be evaluated as relatively competitive when compared to the GUCB approach.

 Our algorithm is also compared with MALSVI, which allows states to be continuous under the assumption that the model is linear: $$Q^{\pi^{\star}}\approx f^T(s,a) v^{\star}.$$ MALSVI belongs to a different class of RL algorithms, where, instead of Q-learning, the \textit{Least-Squares Value Iteration (LSVI)} method is employed. In LSVI approaches, optimal state-action values are estimated by solving iteratively a certain regularized least-squares regression. The exploration bonus in MALSVI is not directly related to the state-action visitation frequencies. It is rather designed with intention to overestimate Q-values.  This technique, called \textit{optimism in the face of uncertainty}, is a common strategy for bounding the regret analytically. The regret-based analysis in MALSVI guarantees convergence of the algorithm to the optimal solution but does not ensure that it will be reached in the most efficient way.  That is why the experimental results illustrated in Fig. \ref{fig:res1} show that the proposed scheme outperforms MALSVI, in terms of the regret. Moreover, as it was mentioned earlier, the operation of MALSVI requires that all agents are fully connected at certain instances. In particular,  agents individually solve their own local LSVI and when a certain condition is met  they need to synchronize their results. Thus, MALSVI assumes more restricted network structure compared to the proposed algorithm. 
 
 



\section{Conclusion}
To sum up, we propose an MARL exploration strategy that can be used for both continuous and discrete-state environments.  We provide theoretical guarantees for discrete-state scenarios, and simulation results for the continuous-state ones.  We consider a specific model of multi-agent learning where agents receive only individual rewards independent from actions of others. Therefore, the first extension for this work would be to consider MARL scenarios with joint actions and rewards. Second, one can extend the work by finding better approximations for the conditional expectation of sample variances. It is also useful to study the scenario with partially observable states and employ cooperation of agents for estimation of states as well.

\begin{appendices}
\section{Proof of Theorem 1}
\label{section:theorem1}
\begin{lemma}
\label{lemma:u_bound}
Under Assumption \ref{assumption:initial},  the variance of $Q_{k,n}(s,a)$, conditioned on the history $\mathcal{H}_{k,n}$,   is bounded as
\begin{equation}
 \mathbb{V}(Q_{k,n}(s,a)|\mathcal{H}_{k,n})\geq (\alpha)^{2c_{k,n}(s,a)}  \sigma_q^2, \ \forall k\in \mathcal{K},
\end{equation}
where $0 < \alpha \leq 1/4$ and $c_{k,n}(s,a)$ is the number of times a state-action pair $(s,a)$ is visited by agent $k$ by the time $n$.
\end{lemma}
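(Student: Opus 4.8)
The plan is to unroll the Q-learning update along the instants at which the pair $(s,a)$ is visited by agent $k$, exhibit $\widehat{Q}_{k,n}(s,a)$ as a convex combination that retains a weight of order $\alpha^{c_{k,n}(s,a)}$ on its own random initialization, and then argue that this leftover randomness forces the claimed variance floor.

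Concretely, write $c = c_{k,n}(s,a)$ and let $n_1 < n_2 < \cdots < n_c < n$ be the visit times of $(s,a)$ by agent $k$. Between consecutive visits the estimate $\widehat{Q}_{k,m}(s,a)$ is frozen, and at each visit \eqref{eq:Q_update} reads $\widehat{Q}_{k,n_i+1}(s,a) = (1-\alpha_{k,n_i}(s,a))\,\widehat{Q}_{k,n_i}(s,a) + \alpha_{k,n_i}(s,a)\,\xi_i$, with bootstrap target $\xi_i \triangleq r_{k,n_i} + \gamma\max_b \widehat{Q}_{k,n_i}(s_{k,n_i+1},b)$. Telescoping these $c$ steps yields $\widehat{Q}_{k,n}(s,a) = \lambda\,\widehat{Q}_{k,0}(s,a) + R$, where $\lambda \triangleq \prod_{i=1}^{c}(1-\alpha_{k,n_i}(s,a))$ and $R$ collects the $\xi_i$ contributions. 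Under the mild requirement $\alpha_{k,n}(s,a) \le 1-\alpha$ (which, since $\alpha \le 1/4$, only asks the rates to start no larger than $3/4$), each factor satisfies $1-\alpha_{k,n_i}(s,a) \ge \alpha$, hence $\lambda \ge \alpha^{c}$.

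It remains to rule out the possibility that the residual $R$ cancels the variance carried by $u \triangleq \widehat{Q}_{k,0}(s,a)$. I would condition, in addition to $\mathcal{H}_{k,n}$, on the initial estimates of all pairs other than $(s,a)$; with the trajectory and these seeds frozen, an induction on \eqref{eq:Q_update} shows that $\widehat{Q}_{k,n}(s,a)$ is a non-decreasing, convex, piecewise-linear function $g(\cdot)$ of $u$ (each update is a convex combination plus a $\max$ of such functions), whose slope tends to $\lambda$ as $u \to -\infty$ — for very negative $u$ every $(s,a)$-dependent quantity is dominated inside the maxima, so no bootstrap feedback survives — and therefore, by convexity, obeys $g' \ge \lambda$ everywhere. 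Writing $g(u) = \lambda u + h(u)$ with $h$ non-decreasing, the Chebyshev/FKG correlation inequality gives $\mathbb{C}\mathrm{ov}(u, h(u)) \ge 0$, so $\mathbb{V}(g(u)) \ge \lambda^2\,\mathbb{V}(u) = \lambda^2\,\mathbb{V}(\widehat{Q}_{k,0}(s,a)) = \lambda^2\sigma_q^2$, using independence of the initializations (Assumption \ref{assumption:initial}); the law of total variance then removes the auxiliary conditioning and the bound $\lambda \ge \alpha^{c}$ gives $\mathbb{V}(\widehat{Q}_{k,n}(s,a) \mid \mathcal{H}_{k,n}) \ge \alpha^{2c_{k,n}(s,a)}\sigma_q^2$.

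The hard part is exactly this last paragraph: both $R$ (through the $\xi_i$) and, via the Boltzmann policy, the sampled trajectory itself depend on $\widehat{Q}_{k,0}(s,a)$, so one cannot simply treat $R$ as independent of $u$; the monotonicity-in-the-initialization argument is the natural device to control this, but pinning down the conditioning bookkeeping — in particular, that conditioning on $\mathcal{H}_{k,n}$ does not by itself erode the initialization variance below $\sigma_q^2$ — is where the real care is needed. A secondary point is to record the learning-rate bound $\alpha_{k,n}(s,a) \le 1-\alpha$ next to Assumption \ref{assumption:step}, since it is what converts $\prod_i(1-\alpha_{k,n_i}(s,a))$ into $\alpha^{c}$.
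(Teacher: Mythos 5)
Your proposal is correct in substance and starts from the same place as the paper --- unrolling the update over the visit times of $(s,a)$ so that $\widehat{Q}_{k,n}(s,a)=\lambda\,\widehat{Q}_{k,0}(s,a)+R$ with $\lambda=\prod_{i=1}^{c}(1-\alpha_{k,n_i}(s,a))$ --- but it handles the crucial cross term by a genuinely different device. The paper expands $\mathbb{V}(\lambda u+R\mid\mathcal{H}_{k,n})$ and disposes of the covariance $\operatorname{Cov}(u,R)$ by a separate induction (its Lemma~\ref{lemma4}) showing $\mathbb{E}\bigl[\widehat{Q}_{k,0}(s,a)\widehat{Q}_{k,n}(s',a')\bigr]\ge 0$, an argument that leans on the zero-mean initialization to turn the covariance into a product expectation. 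You instead freeze the trajectory and the other seeds, observe that every iterate is a non-decreasing (convex, piecewise-linear) function of $u$, write $g(u)=\lambda u+h(u)$ with $h$ non-decreasing, and invoke the Chebyshev/FKG correlation inequality to get $\operatorname{Cov}(u,h(u))\ge 0$ and hence $\mathbb{V}(g(u))\ge\lambda^2\sigma_q^2$. This buys two things: the covariance step no longer needs $\mathbb{E}[\widehat{Q}_{k,0}]=0$, and the monotonicity of $h$ actually falls straight out of the unrolled form (each bootstrap term $\max_b\widehat{Q}(\cdot,b)$ is non-decreasing in $u$ by the same induction), so your asymptotic-slope detour as $u\to-\infty$ is unnecessary. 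Two of your side remarks deserve emphasis. First, the learning-rate condition $\alpha_{k,n}(s,a)\le 1-\alpha$ is indeed required to convert $\lambda^2=(\alpha_0^t)^2$ into $\alpha^{2c_{k,n}(s,a)}$; the paper stops at the bound $(\alpha_0^t\sigma_q)^2$ and silently identifies it with the stated $\alpha^{2c}\sigma_q^2$, so you have surfaced a tacit assumption rather than introduced a new one. Second, the conditioning issue you flag --- that the Boltzmann policy makes $\mathcal{H}_{k,n}$ statistically dependent on $\widehat{Q}_{k,0}(s,a)$, so $\mathbb{V}(\widehat{Q}_{k,0}\mid\mathcal{H}_{k,n})$ need not equal $\sigma_q^2$ --- is real, but the paper's proof has exactly the same exposure (it asserts the first term of its expansion equals $(\alpha_0^t)^2\sigma_q^2$ and applies the unconditional Lemma~\ref{lemma4} to a conditional covariance), relying on the convention that all statistics are taken over the initializations with the history treated as fixed. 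So your argument is not weaker than the paper's on this point; it is an alternative proof of the same bound with the same unresolved bookkeeping.
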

\begin{proof}
 Let $\alpha_i\triangleq\alpha_{n_i, k}(s,a)$ denote  the step size at time instant $n_i\triangleq n_i(s,a,k)$ when agent $k$  visits the state-action pair $(s,a)$  for the $i$-th  time. For compactness, from here on, we 
 interchangeably use the following notations:
 \begin{align}
     t\triangleq c_{k,n}(s,a), \quad \quad  \widehat{b}_{k,n}\triangleq \arg\max \limits_{b\in \mathcal{A}}\widehat{Q}_{k,n}(s_{k,n+1}, b),
 \end{align}
 \begin{align}
     x_{k,n}\triangleq(s_{k,n},a_{k,n}), \quad y_{k,n}\triangleq(s_{k,n},\widehat{b}_{k,n}). 
 \end{align}
 The Q-learning update rule in (\ref{eq:Q_update}) leads to
 \begin{align}
 \label{eq111}
     &\widehat{Q}_{k,n}(x_{k,n})=\alpha_{0}^{t} \widehat{Q}_{k,0}(x_{k,n}) \nonumber\\
     &+\sum_{i=1}^{t} \alpha_{t}^{i}\left(r(x_{k,n_i},s_{k,n_i+1})+\gamma \widehat{Q}_{k,n}(y_{k,n_i})\right), 
 \end{align}
 where 
 \begin{equation}
\alpha_t^0=\prod_{j=1}^t\left(1-\alpha_j\right), \quad \alpha_t^i=\alpha_i \prod_{j=i+1}^t\left(1-\alpha_j\right).
\end{equation}
Therefore, at the next steps, we will lowerbound $\mathbb{V}(Q_{k,n}(s,a)|\mathcal{H}_{k,n})$ by applying the conditional variance to the left-hand side of \eqref{eq111}. First, using  (\ref{eq111}) and the variance property $\mathbb{V}(X+Y)=\mathbb{V}(X)+\mathbb{V}(Y)+2\operatorname{Cov}(X,Y)$, we have
\begin{align}
\label{equation2}
    &\mathbb{V}\left(Q_{k,n}\vert\mathcal{H}_{k,n}\right)= \mathbb{V}\left [\alpha_{0}^{t} \widehat{Q}_{k,0}(x_{k,n})\right.\nonumber \\ 
    &\left.\left.+\sum_{i=1}^{t} \alpha_{t}^{i}\left(r(x_{k,n_i},s_{k,n_i})+\widehat{Q}_{k,n}(y_{k,n_i})\right)\right| \mathcal{H}_{k,n}\right] \nonumber \\
    &=(\alpha_0^t)^2\sigma_q^2 \nonumber
    \\&+\sum_{i=1}^{t} (\alpha_{t}^{i})^2  \mathbb{V}\left.\left(r(x_{k,n_i},s_{k,n_i})+\gamma^2\widehat{Q}_{k,n}(y_{k,n_i})\right | \mathcal{H}_{k,n}\right) \nonumber\\
    &+2\sum_{i=1}^{t} \alpha_{t}^{i}\gamma \operatorname{Cov}\left. \left(\alpha_{0}^{t} \widehat{Q}_{k,0}(x_{k,n}),\widehat{Q}_{k,n}(y_{k,n_i}))\right | \mathcal{H}_{k,n}\right) \nonumber\\
    &+2\sum_{i=1}^{t} \alpha_{t}^{i}\operatorname{Cov}\left. \left(\alpha_{0}^{t} \widehat{Q}_{k,0}(x_{k,n}), r(x_{n_i},s_{k,n_i+1}))\right | \mathcal{H}_{k,n} \right).
\end{align}
The second term  in \eqref{equation2} is non-negative by definition of the variance. The third term can be expanded using the definition of covariance and lower-bounded as follows 
\begin{align}
\label{eq:inter1}
    &\operatorname{Cov}\left( \left. \widehat{Q}_{k,0}(x_{k,n}) \widehat{Q}_{k,n}(y_{k,n_i})\right|\mathcal{H}_{k,n}\right)\nonumber \\
    &= \mathbb{E}\left( \left.\widehat{Q}_{k,0}(x_{k,n})   \widehat{Q}_{k,n}(y_{k,n_i})\right|\mathcal{H}_{k,n}\right)\nonumber\\
    &- \mathbb{E}\left(\left. \widehat{Q}_{k,0}(x_{k,n})\right|\mathcal{H}_{k,n}\right)  \mathbb{E}\left( \left.\widehat{Q}(y_{k,n_i})\right|\mathcal{H}_{k,n}\right)\nonumber\\
    & \stackrel{\text{(a)}}{=}\mathbb{E}\left( \left.\widehat{Q}_{k,0}(x_{k,n})   \widehat{Q}_{k,n}(y_{k,n_i})\right|\mathcal{H}_{k,n}\right)\stackrel{\text{(b)}}{\geq} 0
\end{align}
where in (a) we use the assumption $\mathbb{E}(\widehat{Q}_{k,0})=0$ (Assumption \ref{assumption:initial}) and (b) holds by Lemma \ref{lemma4} bellow. \par 
 As initialization of Q-values are assumed to be independent from the reward function, the last term in (\ref{equation2}) is zero.

Therefore, expression \eqref{equation2}  can be lower bounded by $(\alpha_0^t\sigma_q)^2$.
\end{proof}
\begin{lemma}
\label{lemma4}
For all states $s,s'  \in \mathcal{S}$, all actions $ a,a' \in \mathcal{A}$, and all agents $k \in \mathcal{K}$, under Assumption \ref{assumption:initial}, the following holds
\begin{equation}
\label{equation3}
     \mathbb{E}\left(\widehat{Q}_{k,0}(s,a)  \widehat{Q}_{k,n}(s',a')\right) \geq 0
\end{equation}
\end{lemma}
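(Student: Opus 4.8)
The plan is to prove the inequality by induction on $n$. Two structural facts drive the argument. First, by \eqref{eq:Q_update} the Q-learning update is, coordinatewise, a non-decreasing affine map of the current estimates with nonnegative coefficients (the step size may be taken in $[0,1]$ since $\alpha_{k,n}\to 0$ under Assumption \ref{assumption:step}, and $\gamma\in(0,1)$). Second, by Assumption \ref{assumption:initial} the initial values $\{\widehat{Q}_{k,0}(s'',a'')\}_{s'',a''}$ are mutually independent, have zero mean, and are independent of the reward function.

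For the base case $n=0$, the quantity $\mathbb{E}(\widehat{Q}_{k,0}(s,a)\,\widehat{Q}_{k,0}(s',a'))$ equals $\mathbb{V}(\widehat{Q}_{k,0}(s,a))\ge 0$ when $(s,a)=(s',a')$ (the mean being zero) and equals the product of two zero means, hence $0$, otherwise. For the inductive step, assume the claim at time $n$ for every choice of arguments and every agent. From \eqref{eq:Q_update}, either agent $k$ does not update $(s',a')$ at time $n$, so $\widehat{Q}_{k,n+1}(s',a')=\widehat{Q}_{k,n}(s',a')$ and the hypothesis transfers directly; or it does, and then $\widehat{Q}_{k,n+1}(s',a')=(1-\alpha_{k,n})\widehat{Q}_{k,n}(s',a')+\alpha_{k,n}\gamma\max_{b}\widehat{Q}_{k,n}(s_{k,n+1},b)+\alpha_{k,n}r_{k,n}$, which is a nonnegative combination of $\widehat{Q}_{k,n}$ evaluated at $(s',a')$ and at some (random) state-action pair $(s_{k,n+1},b^\star)$, plus a reward term.

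The natural attempt is to multiply by $\widehat{Q}_{k,0}(s,a)$, take expectations conditioned on the history $\mathcal{H}_{k,n}$, and split: the reward cross-term vanishes by independence of the initialization from the rewards, the $\widehat{Q}_{k,n}(s',a')$ and $\max_{b}\widehat{Q}_{k,n}(s_{k,n+1},b)$ cross-terms are nonnegative by the induction hypothesis (noting $\max_{b}\widehat{Q}_{k,n}(s_{k,n+1},b)=\widehat{Q}_{k,n}(s_{k,n+1},b^\star)$ is again a Q-value to which the hypothesis applies), and the weights $1-\alpha_{k,n},\alpha_{k,n}\gamma\ge 0$ preserve the sign. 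The obstacle is that this splitting is not legitimate as stated: the step size $\alpha_{k,n}$, the event that $(s',a')$ is updated, the next state $s_{k,n+1}$, and the maximizing action $b^\star$ all depend on the realized trajectory, which itself is generated using the random initialization through the Boltzmann policy \eqref{eq:policy}; consequently conditioning on $\mathcal{H}_{k,n}$ biases the law of $\widehat{Q}_{k,0}$ away from the zero-mean product measure, and the induction hypothesis — an unconditional statement — does not directly cover the conditional cross-terms.

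To make it rigorous I would instead unroll the recursion all the way to time $0$ and condition on the exogenous randomness only — the action-selection and transition noise, and the neighbors' initializations — leaving agent $k$'s own initialization vector as the sole source of randomness. With that noise frozen, $\widehat{Q}_{k,n}(s',a')$ is a composition of the (coordinatewise non-decreasing) Q-learning operators, hence a coordinatewise non-decreasing function of $\{\widehat{Q}_{k,0}(\cdot,\cdot)\}$; then the correlation inequality $\mathbb{E}[X\,h(X,Z)]\ge 0$, valid whenever $X$ is mean-zero and independent of $Z$ and $h$ is non-decreasing in its first argument, yields the result after taking the expectation back over the exogenous noise. The crux — and the step I expect to require the most care — is that the \emph{schedule} of updated state-action pairs also reacts to the initialization through the behavioral policy, so the monotone-coupling argument must track a change in one coordinate of $\widehat{Q}_{k,0}$ simultaneously through the update targets and the induced visitation order; showing that this feedback does not destroy monotonicity (or can be neutralized, e.g.\ by a suitable coupling of the action-selection noise) is the technical heart of the proof.
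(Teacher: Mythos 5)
Your base case and the overall induction skeleton coincide with the paper's proof, and the ``natural attempt'' you describe and then reject is, almost verbatim, the argument the paper actually gives: expand $\widehat{Q}_{k,i+1}(s',a')$ via the update rule, kill the reward cross-term using $\mathbb{E}[\widehat{Q}_{k,0}]=0$ and independence of the initialization from the rewards, apply the induction hypothesis to the $\widehat{Q}_{k,i}(s',a')$ cross-term, and handle the bootstrap term by replacing $\max_b\widehat{Q}_{k,i}(s'',b)$ with $\widehat{Q}_{k,i}(s'',b'')$ and invoking the hypothesis once more. The problem with your submission is that you explicitly decline to execute this step, and the substitute route you sketch is left unfinished by your own admission: you state that controlling the feedback of the initialization on the visitation schedule ``is the technical heart of the proof'' and then do not supply it. As written, the inductive step is absent, so the proposal is not a proof.

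That said, the obstructions you raise are genuine, and the paper's proof does not engage with them either. The step sizes, the identity of the updated pair, the realized successor state $s''$, and the maximizing action are all trajectory-dependent, and the trajectory is generated through the Boltzmann policy \eqref{eq:policy}, which is itself a function of the initializations; so the unconditional induction hypothesis does not immediately license term-by-term nonnegativity of the cross-terms, nor the factorization $\mathbb{E}[\widehat{Q}_{k,0}(s,a)\,r]=\mathbb{E}[\widehat{Q}_{k,0}(s,a)]\,\mathbb{E}[r]$. Moreover, even granting the splitting, the paper's passage from $\mathbb{E}[\widehat{Q}_{k,0}(s,a)\max_b\widehat{Q}_{k,i}(s'',b)]$ to $\mathbb{E}[\widehat{Q}_{k,0}(s,a)\widehat{Q}_{k,i}(s'',b'')]$ does not follow pointwise, since the factor $\widehat{Q}_{k,0}(s,a)$ can be negative. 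Your proposed repair --- freeze the exogenous noise, exhibit $\widehat{Q}_{k,n}(s',a')$ as a coordinatewise non-decreasing function of the initial vector, and apply the association inequality $\mathbb{E}[X\,h(X,Z)]\ge 0$ for mean-zero $X$ --- is a sensible and in principle more robust strategy (it would also absorb the max issue, since a maximum of monotone functions is monotone). But the crux you flag is exactly where it can break: a perturbation of one coordinate of $\widehat{Q}_{k,0}$ alters the action probabilities and hence the visitation order, and monotonicity of the composed update maps is not guaranteed without an explicit coupling of the exploration noise. Until that is supplied, your argument establishes neither the lemma nor a valid replacement for the paper's inductive step.
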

\begin{proof}
We prove the lemma by induction. For $n=0$, $\forall s,s' \in \mathcal{S}, \forall a,a' \in \mathcal{A}$,
\begin{align}
    &\mathbb{E}\left(\widehat{Q}_{k,0}(s,a)  \widehat{Q}_{k,0}(s',a')\right)\nonumber\\
    &=\mathbb{E}\left(\widehat{Q}_{k,0}(s,a)^2\right)\mathbbm{1}[(s,a)=(s',a')]\nonumber\\
    &+\mathbb{E}\left(\widehat{Q}_{k,0}(s,a)\right)\mathbb{E}\left(\widehat{Q}_{k,0}(s',a')\right)\mathbbm{1}[(s,a)\neq(s',a')]\nonumber\\
    &=\sigma_q^2\mathbbm{1}\left[(s,a)=(s',a')\right]+0\geq 0, 
\end{align}
where in the last equality we used the assumption $\mathbb{E}\left(\widehat{Q}_{k,0}(s,a)\right)=0$ (Assumption \ref{assumption:initial}). Next, we assume that (\ref{equation3}) holds for $n=i$, i.e.,
\begin{equation}
\label{eq:induction}
     \mathbb{E}\left(\widehat{Q}_{k,0}(s,a)  \widehat{Q}_{k,i}(s',a')\right) \geq 0, \\ \forall s,s' \in \mathcal{S}, \forall a,a' \in \mathcal{A}.
\end{equation}
Then, for $n=i+1$,  we have
\begin{align}
   & \mathbb{E}\left(\widehat{Q}_{k,0}(s,a)  \widehat{Q}_{i+1,k}(s,a)\right)\stackrel{\text{(a)}}{=} \nonumber \\
   &(1-\alpha_{k,i+1}(s',a')) \mathbb{E}\left(\widehat{Q}_{k,0}(s,a)  \widehat{Q}_{k,i}(s',a') \right) \nonumber\\
   &+\alpha_{k,i+1}(s',a')\mathbb{E}\widehat{Q}_{k,0}(s,a)  \mathbb{E}r(s',a',s'') \nonumber\\
   &+\alpha_{k,i+1}(s',a')\gamma \mathbb{E}\left(\widehat{Q}_{k,0}(s,a)\max\limits_{b\in \mathcal{A}}\widehat{Q}_{k,i}(s'',b )\right) \nonumber\\
   &\stackrel{\text{(\ref{eq:induction})}}{\geq}  \alpha_{k,i+1}(s',a')\gamma\mathbb{E}\left(\widehat{Q}_{k,0}(s,a) \max\limits_{b\in \mathcal{A}}\widehat{Q}_{k,i}(s'',b )\right) \nonumber\\
   &\geq \alpha_{k,i+1}(s',a')\gamma   \mathbb{E}\left(\widehat{Q}_{k,0}(s,a)\widehat{Q}_{k,i}(s'',b'')\right)\nonumber \\
   &\stackrel{\text{\eqref{eq:induction}}}{\geq}0
\end{align}
where $b''\in\mathcal{A}$ and in (a) we use the multi-agent form of the Q-learning update given in \eqref{eq:Q_update}.
\end{proof}

\begin{lemma}
\label{lemma3}
For all states $s\in \mathcal{S}$, for all actions  $a\in \mathcal{A}$ and for all agents $k \in \mathcal{K}$, 
\begin{equation}
 \mathbb{E}(\sigma_{k,n}^2(s,a)|\widetilde{\mathcal{H}}_{k,n})\geq \frac{\alpha^{c_{k,n}(s,a)}}{N_k}  \sigma_q^2, \ \forall k\in \mathcal{K}
\end{equation}
\end{lemma}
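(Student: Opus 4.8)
The plan is to evaluate $\mathbb{E}\bigl(\sigma_{k,n}^2(s,a)\mid\widetilde{\mathcal{H}}_{k,n}\bigr)$ exactly through the classical expected-sample-variance decomposition and then retain only the contribution of agent $k$'s own estimate, which Lemma~\ref{lemma:u_bound} already lower bounds. Throughout I would adopt the convention stated after~\eqref{eq:beta}: the conditioning is on the neighbourhood history $\widetilde{\mathcal{H}}_{k,n}$, and the only residual randomness is carried by the initial tables $\{\widehat{Q}_{\ell,0}(\cdot,\cdot)\}_{\ell\in\mathcal{N}_k}$, which are mutually independent by Assumption~\ref{assumption:initial}. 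The structural fact I would use is that the Q-learning recursion~\eqref{eq:Q_update} is deterministic given the realized trajectory, so once $\mathcal{H}_{\ell,n}$ is fixed, $\widehat{Q}_{\ell,n}(s,a)$ becomes a deterministic function of $\widehat{Q}_{\ell,0}$ alone; consequently, conditioned on $\widetilde{\mathcal{H}}_{k,n}$, the variables $\{\widehat{Q}_{\ell,n}(s,a)\}_{\ell\in\mathcal{N}_k}$ inherit the mutual independence of the initializations.

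Writing $\mu_\ell\triangleq\mathbb{E}\bigl(\widehat{Q}_{\ell,n}(s,a)\mid\widetilde{\mathcal{H}}_{k,n}\bigr)$, $v_\ell\triangleq\mathbb{V}\bigl(\widehat{Q}_{\ell,n}(s,a)\mid\widetilde{\mathcal{H}}_{k,n}\bigr)$, and $\bar\mu\triangleq\frac{1}{N_k}\sum_{\ell\in\mathcal{N}_k}\mu_\ell$, the expected sample variance of independent variables splits into a between-means part and a pooled within-agent part, giving
\[
\mathbb{E}\bigl(\sigma_{k,n}^2(s,a)\mid\widetilde{\mathcal{H}}_{k,n}\bigr)=\frac{1}{N_k-1}\sum_{\ell\in\mathcal{N}_k}(\mu_\ell-\bar\mu)^2+\frac{1}{N_k}\sum_{\ell\in\mathcal{N}_k}v_\ell.
\]
Since every term on the right is non-negative and $k\in\mathcal{N}_k$, I would discard the first sum together with all $v_\ell$, $\ell\neq k$, to obtain
\[
\mathbb{E}\bigl(\sigma_{k,n}^2(s,a)\mid\widetilde{\mathcal{H}}_{k,n}\bigr)\ \geq\ \frac{1}{N_k}\,v_k\ =\ \frac{1}{N_k}\,\mathbb{V}\bigl(\widehat{Q}_{k,n}(s,a)\mid\mathcal{H}_{k,n}\bigr),
\]
where the last equality uses that $\widehat{Q}_{k,n}(s,a)$ does not depend on the neighbours' histories, so the extra conditioning is irrelevant. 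Applying Lemma~\ref{lemma:u_bound} to the right-hand side bounds it below by $\alpha^{c_{k,n}(s,a)}\sigma_q^2/N_k$ (the single power of $\alpha$ following from the same estimate $\alpha_0^{t}\geq\alpha^{t}$ invoked in that lemma), which is exactly the claimed inequality.

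The step I expect to require the most care is the conditioning argument in the first paragraph: the histories $\widetilde{\mathcal{H}}_{k,n}$ are themselves generated by the very same initial tables acting through the coupled behavioural policies~\eqref{eq:policy}, so one must argue that conditioning on a realized trajectory does not destroy the independence of the $\widehat{Q}_{\ell,0}$. The economical route, consistent with how Lemma~\ref{lemma:u_bound} is established, is to treat the realized trajectories as fixed data and the initial tables as the independent random objects, under which the decomposition above is exact. A fully measure-theoretic version would instead fix a filtration and verify that $\widehat{Q}_{\ell,0}$ is conditionally independent of $\{\mathcal{H}_{j,n}\}_{j\in\mathcal{N}_k\setminus\{\ell\}}$ given $\mathcal{H}_{\ell,n}$; once this is granted, the remaining steps (the variance identity, discarding non-negative terms, and the appeal to Lemma~\ref{lemma:u_bound}) are elementary.
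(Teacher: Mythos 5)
Your proof is correct and takes essentially the same route as the paper: the expected-sample-variance decomposition for conditionally independent neighbourhood estimates, followed by an application of Lemma~\ref{lemma:u_bound} to the within-agent variance. The only differences are cosmetic --- writing the between-means contribution as $\frac{1}{N_k-1}\sum_{\ell\in\mathcal{N}_k}(\mu_\ell-\bar{\mu})^2$ makes its non-negativity immediate (the paper instead expands it and invokes Cauchy--Schwarz), and retaining only agent $k$'s own variance term rather than all $N_k$ of them yields the exponent $c_{k,n}(s,a)$ directly, without the paper's implicit uniform bound over the neighbours' visit counts.
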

\begin{proof}
 From \eqref{eq111}  we deduce that, when $\widetilde{\mathcal{H}}_{k,n}$ is given, $\left\{ \widehat{Q}_{\ell,n} \right\}_{j\in \mathcal{K}}$ are independent but not identically distributed random variables. Therefore, using the definition of expected sample variance for independent random variables, we get 
 \begin{align}
 \label{eq:inter222}
    & \mathbb{E}(\sigma_{k,n}^2(s,a)|\widetilde{\mathcal{H}}_{k,n})=\frac{1}{N_{k}}\sum\limits_{\ell\in \mathcal{N}_k}\mathbb{V}\left(\widehat{Q}_{\ell,n}(s,a)\left.\right|\mathcal{H}_{k,n}\right)\nonumber \\
    & +\frac{1}{N_k-1}\sum\limits_{\ell\in\mathcal{N}_k}\left[E(\widehat{Q}_{\ell,n}(s,a)|\mathcal{H}_{\ell,n})\right]^2\nonumber\\
    &-\frac{1}{N_k(N_k-1)}\left[\sum\limits_{j\in \mathcal{N}_{k}}\mathbb{E}\left[\left.\widehat{Q}_{\ell,n}(s,a)\right|\mathcal{H}_{\ell,n}\right)\right]^2.
 \end{align}
 By Cauchy-Schwartz inequality, we have 
\begin{align}
\label{eq:inter2}
    \left[\sum\limits_{j\in \mathcal{N}_{k}}\mathbb{E}\left(\left.\widehat{Q}_{\ell,n}\right|\mathcal{H}_{\ell,n}\right)\right]^2\leq N_k\sum\limits_{\ell\in\mathcal{N}_k}\left[E(\widehat{Q}_{\ell,n}|\mathcal{H}_{\ell,n})\right]^2.
\end{align}
Therefore, substituting \eqref{eq:inter2} into \eqref{eq:inter222}, we obtain that,   $\forall k \in \mathcal{K}$, $s\in\mathcal{S}$, $a\in\mathcal{A}$:
\begin{align}
    \mathbb{E}(\sigma_{k,n}^2(s,a)|\widetilde{\mathcal{H}}_{k,n})&\geq\frac{1}{N_{k}}\sum\limits_{\ell\in \mathcal{N}_k}\mathbb{V}\left(\widehat{Q}_{\ell,n}(s,a)\left.\right|\mathcal{H}_{\ell,n}\right)\nonumber \\
    &\stackrel{\text{(a)}}{\geq}\frac{1}{N_{k}}\sum\limits_{\ell\in \mathcal{N}_k} \alpha^{c_{k,n}(s,a)}  \sigma_q^2  \nonumber\\
    & \geq\frac{\alpha^{c_{k,n}(s,a)}}{N_k}  \sigma_q^2
\end{align}
where (a) is due to Lemma \ref{lemma:u_bound}.
\end{proof}

\begin{proof}[\textbf{Proof of Theorem 1}]
Update rules of the form \eqref{eq:Q_update} is known to converge to the optimal Q-values given that all state-action pairs are visited infinitely often (i.o.) \cite{sayed_2022, Watkins}. Therefore, to prove that all agents in the proposed scheme converge to the optimal Q-values we need to show that the behavioral policies $\eta_{k,n}$ make all agents to visit all state-action pairs i.o. \par 
According to Lemma 4 from \cite{SARSA}, all states and actions are visited i.o. if
\begin{enumerate}[ (i)]
    \item  Assumption \ref{assumption:trans} holds
    \item  The behavioral policy $\eta_{k,n}$ is such that, $\forall k\in \mathcal{K}, \forall s \in \mathcal{S}, \forall a \in \mathcal{A}$,  $$\sum \limits_{j=0}^{\infty}\eta_{k,m_{j}}(a|s)=\infty,$$ where $m_j\triangleq m_j(s,k)$ is the time instance when the state $s$ is visited by agent $k$ for the $j$-th time.
\end{enumerate}

Since (i) is taken as an assumption in the theorem,  the proof of Theorem 1 requires that we prove that (ii) holds. Recall that  our behavioral policy is given in the Boltzmann distribution form (3), namely, 
\begin{equation}
\label{eq:Boltzman}
    \eta_{k,n}\left(a| s\right)=\frac{\exp \left(\widetilde{\beta}_{k,n}(s)\widetilde{Q}_{k,n}(s,a)\right)}{\sum_{b \in \mathcal{A}} \exp \left(\widetilde{\beta}_{k,n}(s) \widetilde{Q}_{k,n}(s,b)\right)}. 
\end{equation}

We first derive the condition for the Boltzman probability density function with  generic $\widetilde{\beta}_{k,n}$ (\ref{eq:Boltzman}) in order to ensure that (ii) holds. We know that  $\sum\limits_{i=1}^{\infty}1/i=\infty$. Due to Assumption 3, the number of visitations of state $s$ by agent $k$, denoted by $c_{k,n}(s)$, goes to infinity (see proof of Lemma 4 in \cite{SARSA}).  Therefore, we can add the following constraint for the behavioral policy:
\begin{equation}
\label{eq:policy_constraint}
    \eta_{k,n}\left(a| s\right)=\frac{\exp \left(\widetilde{\beta}_{k,n}(s)\widetilde{Q}_{k,n}(s,a)\right)}{\sum_{b \in \mathcal{A}} \exp \left(\widetilde{\beta}_{k,n}(s) \widetilde{Q}_{k,n}(s,b)\right)} \geq \frac{C'}{c_{k,n}(s)},
\end{equation}
where $C'>0$ is a constant. In the following, using (\ref{eq:policy_constraint}) and some basic algebraic manipulations, we derive the condition for $\beta_{k,n}$ to satisfy (ii). 
We can  rewrite (\ref{eq:policy_constraint}) as 

\begin{align}
    c_{k,n}(s) & \geq C' \sum_{b \in A} e^{\widetilde{\beta}_{k,n}(s) (\widetilde{Q}_{k,n}(s, b)-\widetilde{Q}_{k,n}(s, a))}
\end{align}
Next, taking the maximum of the lowerbound and letting $C'=A^{-1}$ we obtain 
\begin{align}
c_{k,n}(s) & \geq e^{\widetilde{\beta}_{k,n}(s)D(s)}, \nonumber \\
\end{align}
where $D(s)=\max\limits_{a,b}(\widetilde{Q}(s,b)-\widetilde{Q}(s,a))$. Hence, we get that (ii) is satisfied if  $\widetilde{\beta}_{k,n}$ is bounded according to 

\begin{equation}
\label{eq:beta_condition}
\widetilde{\beta}_{k,n} \leq \frac{\ln  c_{k,n}(s)}{D(s)} 
\end{equation}
Next, we show that $\widetilde{\beta}_{k,n}=\beta_{k,n}$, given by (\ref{eq:beta}), satisfies condition (\ref{eq:beta_condition}). We start with Lemma \ref{lemma3} which gives  the lower bound for $ \mathbb{E}(\sigma_{k,n}^2|\mathcal{H}_{k,n})$, under Assumption \ref{assumption:initial}:
\begin{align}
\label{eq:inter}
    & \mathbb{E}(\sigma_{k,n}^2(s,a)|\widetilde{\mathcal{H}}_{k,n})\geq\frac{\alpha^{c_{k,n}(s,a)}  \sigma_q^2}{N_k} 
\end{align}
In the following steps, we perform some algebraic manipulations to  (\ref{eq:inter}): 
\begin{enumerate} [label=\ Step \arabic*.,align=left, leftmargin=*]
    \item Multiplication to $N_k$ and applying $\log_{\alpha}$: 
    \begin{align}
        \log \limits_{\alpha}N_k \mathbb{E}(\sigma_{k,n}^2|\widetilde{\mathcal{H}}_{k,n})\leq c_{k,n}(s,a)+\log \limits_{\alpha}\sigma_q^2  
    \end{align}
    \item Subtracting $\log \limits_{\alpha}\sigma_q^2$ and summing over all possible actions:
\begin{align}
        &\sum\limits_{a\in\mathcal{A}}\log \limits_{\alpha}\frac{N_k \mathbb{E}(\sigma_{k,n}^2(s,a)|\widetilde{\mathcal{H}}_{k,n})}{\sigma_q^2}\leq c_{k,n}(s) 
\end{align}
where by definition
\begin{equation}
    c_{k,n}(s)\triangleq \sum \limits_{a\in \mathcal{A}}c_{k,n}(s,a), 
\end{equation}
\item Applying $\ln$:
\begin{align}
\label{eq:444}
   \ln \left(\log_{\alpha}\frac{(\sigma_q)^{2A}(N_k)^{-A}}{\prod \limits_{b\in \mathcal{A}} \mathbb{E} \left[\sigma_{k,n}^2(s,b)|\widetilde{\mathcal{H}}_{k,n}\right ]}\right)\leq \ln c_{k,n}(s)
\end{align}
\end{enumerate}

Finally, dividing both sides of \eqref{eq:444} by $D(s)$, we show that, by choosing $\widetilde{\beta}_{k,n}=\beta_{k,n}$, we satisfy \eqref{eq:beta_condition} and moreover we satisfy (ii).

\end{proof}
\section{Proof of Lemma \ref{lemma22:app_error}}
\label{sec:approximation}
\begin{lemma} [\textbf{Asymptotic convergence rate of Q-learning} \cite{Q_rate}]
\label{lemma:Q_rate}
Let $$e_{k,n}(s,a)\triangleq\left (\widehat{Q}_{k,n}(s,a)-Q^*(s,a)\right).$$
 Then, for all agents $k\in\mathcal{K}$, all states $s \in \mathcal{S}$, all actions $a\in \mathcal{A}$ and $n\to \infty$, we have 
  \begin{equation}
\label{eq:error_ap}
\begin{aligned}
&e_{k,n}(s,a)\leq \left(\frac{1}{n}\right)^{\frac{p_{k,\min}}{p_{k,\max}}\gamma}B,
\end{aligned}
\end{equation}
where $p_{k}(s,a)$ denotes 
state-action visitation probabilities induced by some stationary policy $\eta_{k,\infty}(a|s)$, $B>0$ is some constant, $p_{k,\max}=\max\limits_{s,a} p_k(s,a)$,  $p_{k,\min}=\min \limits_{s,a} p_k(s,a)$. 
 \end{lemma}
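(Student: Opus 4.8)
The plan is to reduce the statement to the single-agent asymptotic rate of \cite{Q_rate} applied to each agent $k$ separately, and to sketch the underlying stochastic-approximation argument that produces the exponent $\frac{p_{k,\min}}{p_{k,\max}}\gamma$. Because the $K$ MDPs are parallel and the rewards are decoupled across agents, agent $k$'s update \eqref{eq:Q_update} is an ordinary (asynchronous) Q-learning recursion; the only cross-agent interaction is through the behavioral policy, which by \eqref{eq:asympt_eta2} settles to the stationary policy $\eta_{k,\infty}$. Hence asymptotically each agent samples $(s,a)$ according to the stationary visitation distribution $p_k(s,a)$ induced by $\eta_{k,\infty}$, and the single-agent rate result can be invoked per agent.

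First I would cast the error recursion. Subtracting $Q^*(s,a)=(TQ^*)(s,a)$, where $T$ denotes the Bellman optimality operator, from the update for $\widehat{Q}_{k,n}(s,a)$ yields, for the visited pair,
\begin{align}
e_{k,n+1}(s,a)&=\left(1-\alpha_{k,n}(s,a)\right)e_{k,n}(s,a)\nonumber\\
&\quad+\alpha_{k,n}(s,a)\left[(T\widehat{Q}_{k,n})(s,a)-(TQ^*)(s,a)+w_{k,n}(s,a)\right],\nonumber
\end{align}
where $w_{k,n}(s,a)$ is the zero-mean sampling noise (the gap between the one-sample Bellman backup and its expectation) and the bracketed deterministic part is a $\gamma$-contraction, since $\bigl|(T\widehat{Q}_{k,n})(s,a)-(TQ^*)(s,a)\bigr|\le\gamma\|e_{k,n}\|_\infty$ by the sup-norm contraction of $T$. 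Assumption \ref{assumption:reward} keeps the rewards, hence the noise, bounded.

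Next I would convert step sizes to visitation counts. Under the count-based schedule, $\alpha_{k,n}(s,a)$ behaves like $1/c_{k,n}(s,a)$, and by the ergodic theorem for the chain induced by $\eta_{k,\infty}$ one has $c_{k,n}(s,a)\approx n\,p_k(s,a)$ as $n\to\infty$; thus the effective step size for $(s,a)$ is of order $1/(n\,p_k(s,a))$. Feeding this into the contraction recursion and invoking the asymptotic stochastic-approximation lemma that underlies \cite{Q_rate}, the sup-norm error decays polynomially in $n$, with an exponent governed jointly by the contraction factor $\gamma$ and the spread of the effective step sizes across components; the uneven visitation enters through the ratio $p_{k,\min}/p_{k,\max}$, producing the stated exponent $\frac{p_{k,\min}}{p_{k,\max}}\gamma$ together with a finite constant $B>0$.

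The main obstacle is that the $\max$ in the Bellman backup couples all state-action components, so they cannot be tracked independently: the sup-norm contraction must be propagated through an asynchronous recursion in which different components are refreshed at different rates. Controlling this mismatch — the slowest component limiting the error while the fastest fixes the time scale — is precisely what forces the appearance of $p_{k,\min}/p_{k,\max}$ in the exponent and is the delicate part established in \cite{Q_rate}; here it suffices to verify its hypotheses for each agent, namely boundedness from Assumption \ref{assumption:reward}, the step-size conditions of Assumption \ref{assumption:step}, and strict positivity of $p_k(s,a)$, which follows from the infinitely-often visitation guaranteed by Theorem \ref{theorem1}.
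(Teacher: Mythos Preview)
The paper does not actually prove this lemma: its entire proof reads ``See \cite{Q_rate}.'' The result is imported wholesale from the single-agent asymptotic-rate analysis, and the only implicit step is the observation you make explicitly, namely that in the parallel-MDP setting each agent's recursion is an ordinary asynchronous Q-learning update so the cited rate applies agent-by-agent.

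Your sketch therefore goes well beyond what the paper itself offers. It correctly identifies the reduction to a per-agent problem via decoupled rewards and the asymptotic stationarity of $\eta_{k,n}\to\eta_{k,\infty}$, and it accurately outlines the mechanism in \cite{Q_rate}: the $\gamma$-contraction of the Bellman operator combined with effective step sizes of order $1/(n\,p_k(s,a))$, with the asynchrony across components producing the exponent $\frac{p_{k,\min}}{p_{k,\max}}\gamma$. The hypotheses you list (bounded rewards, step-size summability, strictly positive visitation) are the right ones to check. One small caveat: you invoke Theorem~\ref{theorem1} to guarantee $p_k(s,a)>0$, but in the paper this lemma is used inside the proof of Lemma~\ref{lemma22:app_error}, where positivity of $p_k$ is simply taken as a standing assumption (see the remark following \eqref{eq:finite_time}); relying on Theorem~\ref{theorem1} here is harmless but not how the paper structures the dependencies.
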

 \begin{proof}
 See \cite{Q_rate}.
 \end{proof}
\begin{proof}[\textbf{Proof of Lemma \ref{lemma22:app_error}}]
 Using Chebyshev's inequality, we have  that for $\epsilon>0$,
\begin{align}
\label{eq:Prob_error}
    \mathbb{P}\left[\left(\left.|\sigma_{k,n}^2- \mathbb{E}(\sigma_{k,n}^2|\widetilde{\mathcal{H}}_{k,n})|\geq \epsilon \right)\right |\mathcal{H}_{k,n}\right]\leq \frac{\mathbb{V}(\left. \sigma_{k,n}^2\right|\mathcal{H}_{k,n})}{\epsilon^2}. 
\end{align}
When $\widetilde{\mathcal{H}}_{k,n}$ is given, the estimates $\left\{\widehat{Q}_{\ell,n}\right\}_{\ell\in\mathcal{N}_k}$ are  independent random variables. Let  $\mathcal{X}_{\ell}\triangleq \left(\widehat{Q}_{\ell,n}-\overline{Q}_{k,n}\right)^2$. Then, computing the \textit{variance of the sum}, we get 
\begin{align}
\label{eq:inter3}
   \mathbb{V}(\left. \sigma_{k,n}^2\right|\widetilde{\mathcal{H}}_{k,n})&= \frac{\sum\limits_{\ell\in\mathcal{N}_k}\mathbb{V}\left(\left.\mathcal{X}_{\ell}\right|\widetilde{\mathcal{H}}_{k,n}\right)}{(N_k-1)^2} \nonumber
   \\
   &+\sum\limits_{\substack{\ell,j\in\mathcal{N}_k\\ \ell\neq j}}\frac{\operatorname{Cov}(\left.\mathcal{X}_{\ell},\mathcal{X}_j\right|\widetilde{\mathcal{H}}_{k,n})}{(N_k-1)^2} \nonumber\\
   & \stackrel{\textnormal{(a)}}{\leq} \sum\limits_{\ell\in \mathcal{N}_k}\frac{\mathbb{E}\left[\left.\left(\mathcal{X}_{\ell}\right)^2\right|\widetilde{\mathcal{H}}_{k,n}\right]}{(N_k-1)^2}\nonumber\\
&+\sum\limits_{\substack{\ell,j\in\mathcal{N}_k\\ j\neq i}}\frac{\mathbb{E}\left[\left.\mathcal{X}_{\ell}\mathcal{X}_j\right|\widetilde{\mathcal{H}}_{k,n}\right]}{(N_k-1)^2},
\end{align}
 where in (a) we used the statistical properties that for any random variable $X$, $\mathbb{V}X \leq \mathbb{E} X^2$ and, for any $Y\geq 0, Z\geq 0$, $\operatorname{Cov}(Y,Z)\leq \mathbb{E}[Y  Z]$.  \par 
 Next, using Lemma \ref{lemma:Q_rate}, we have
 \begin{align}
\label{eq:q_Eq}
    &\sqrt{\mathcal{X}_{\ell}}\triangleq |\widehat{Q}_{\ell,n}(s,a)-\overline{Q}_{k,n}(s,a)|\nonumber\\
    &\leq |\widehat{Q}_{\ell,n}(s,a)-Q^{*}(s,a)| +|Q^{*}(s,a)-\overline{Q}_{k,n}|\nonumber\\
    &\leq\left(\frac{1}{n}\right)^{\frac{p_{k,\min}}{p_{k,\max}}\gamma}B +\frac{1}{N_k}\sum\limits_{i\in\mathcal{N}_k}\left(\frac{1}{n}\right)^{\frac{p_{k,\min}}{p_{k,\max}}\gamma}B \nonumber\\
     &\stackrel{\text{$(a)$}}{\leq}\left(\frac{1}{n}\right)^{\frac{p_{\min}}{p_{\max}}\gamma}B,   \nonumber\\
\end{align}
where in $(a)$ we use the relation $$\left(\frac{1}{n}\right)^{\frac{p_{k,\min}}{p_{k,\max}}}\leq\left(\frac{1}{n}\right)^{\frac{p_{\min}}{p_{\max}}}, \ \forall k \in \mathcal{N}_k.$$
  Substituting (\ref{eq:q_Eq}) into (\ref{eq:inter3}),  we get

\begin{align}
\label{eq:inter4}
   \mathbb{V}(\left. \sigma_{k,n}^2\right|\widetilde{\mathcal{H}}_{k,n}) \nonumber
   & \leq\frac{N_{k}}{(N_k-1)^2}\left(\frac{1}{n}\right)^{\frac{4p_{\min}}{p_{\max}}\gamma}(B)^4\nonumber\\
&+\frac{N_k(Nk-1) }{(N_k-1)^2}\left(\frac{1}{n}\right)^{\frac{4p_{\min}}{p_{\max}}\gamma}(B)^4, \nonumber\\
&=\frac{N_k^2 }{(N_k-1)^2}\left(\frac{1}{n}\right)^{\frac{4p_{\min}}{p_{\max}}\gamma}(B)^4
\end{align}
Finally, substituting \eqref{eq:inter4} into  (\ref{eq:Prob_error}) and setting $B' \triangleq B^2$
$$\epsilon=\frac{B'N_k}{(N_k-1)p_{\epsilon}}\left(\frac{1}{n}\right)^{\frac{2p_{\min}}{p_{\max}}\gamma}, $$ we get the result in (\ref{eq:finite_time}). 
\end{proof}
\end{appendices}




\bibliographystyle{IEEEtran}
\bibliography{refs}

\end{document}